	\DeclareMathSymbol{\widehatsym}{\mathord}{largesymbols}{"62}
	\DeclareMathSymbol{\widetildesym}{\mathord}{largesymbols}{"65}
	\newtheorem{proposition}{\textbf{Proposition}}
	\newtheorem{theorem}{Theorem}
	\newcommand\bigzero{\makebox(0,0){\text{\Huge$0$}}}
	\newcommand\bigast{\makebox(0,0){\text{\Huge$\ast$}}}
	\def\bI{\ensuremath{{\bf I}}}
	\def\bW{\ensuremath{{\bf W}}}
	\def\bV{\ensuremath{{\bf V}}}
	\def\bH{\ensuremath{{\bf H}}}
	\def\bQ{\ensuremath{{\bf Q}}}
	\def\bR{\ensuremath{{\mathbb{R}}}}
	\def\bX{\ensuremath{{\bf X}}}
	\def\bY{\ensuremath{{\bf Y}}}
	\def\bZ{\ensuremath{{\bf Z}}}
	\def\be{\ensuremath{{\bf e}}}
	\def\by{\ensuremath{{\bf y}}}
	\def\bw{\ensuremath{{\bf w}}}
	\def\bz{\ensuremath{{\bf z}}}
	\def\0{\ensuremath{{\bf 0}}}
	\def\tr{\ensuremath{{\mathrm{Tr}}}}
	\def\diag{\ensuremath{{\mathrm{diag}}}}
	\def\bS{\ensuremath{{\bf S}}}
	\def\cA{\ensuremath{{\mathcal A}}}
	\def\cL{\ensuremath{{\mathcal L}}}
	\def\balpha{\ensuremath{\bm{\alpha}}}
	\def\bgamma{\ensuremath{\bm{\gamma}}}
	\def\balphas{\ensuremath{\bm{\alpha}^\ast}}
	\def\bZ{\ensuremath{{\bf Z}}}
	\def\bZs{\ensuremath{{\bf Z^\ast}}}
	\def\bH{\ensuremath{{\bf H}}}
	\def\bHs{\ensuremath{{\bf H^\ast}}}
	\def\bws{\ensuremath{{\bf w^\ast}}}
	\def\yhat{\ensuremath{\widehat{\bf y}}}
	\def\bU{\ensuremath{{\bf U}}}
	\tikzstyle myBG=[line width=3pt,opacity=1.0]
	\DeclareMathAlphabet{\mathbcal}{OMS}{cmsy}{b}{n}
\begin{document}
		
		\title{Blind Image Deblurring Using \\ Row-Column Sparse Representations}
		%
		%
		%
		
		\author{\normalsize Mohammad Tofighi, \emph{Student Member, IEEE}, Yuelong Li, \emph{Student Member, IEEE}, and Vishal Monga, \emph{Senior Member, IEEE \vspace{-0.5cm}}
			\thanks{M. Tofighi, Y. Li, and V. Monga are with Department of Electriccal Engineering, The Pennsylvania State University, University Park,
				PA, 16802 USA, Emails: tofighi@psu.edu, yul200@psu.edu, vmonga@engr.psu.edu
			}
			\thanks{Research was supported by a US National Science Foundation CAREER Award to V. Monga.}
		}

		\markboth{Accepted to IEEE Signal Processing Letters, December~2017}%
		{Tofighi  \MakeLowercase{\textit{et al.}}: Blind Image Deblurring Using Row-Column Sparse Representations}
		

		\maketitle

		\begin{abstract}
			Blind image deblurring is a particularly challenging
			inverse problem where the blur kernel is unknown and must be estimated en route to recover the deblurred image. The problem is of strong practical relevance since many imaging devices such as cellphone cameras, must rely on deblurring algorithms to yield satisfactory image quality. Despite significant research effort, handling large motions remains an open problem. In this paper, we develop a new method called Blind Image Deblurring using Row-Column Sparsity (BD-RCS) to address
			this issue. Specifically, we model the outer product of kernel and image
			coefficients in certain transformation domains as a rank-one matrix, and
			recover it by solving a rank minimization problem. Our central contribution then includes solving {\em two new} optimization problems involving row and column sparsity to automatically determine blur kernel and image support sequentially.  The kernel and image can then be recovered through a singular value decomposition (SVD).
			Experimental results on linear motion deblurring demonstrate that BD-RCS can yield better results than state of the art, particularly when the blur is caused by large motion. This is confirmed both visually and through quantitative measures. 
		\end{abstract}
		%
		%
		\section{Introduction}
		\label{sec:intro}
		\IEEEPARstart{I}{Mage} blurring may come from different sources: optical defocusing, camera
		shaking, etc and frequently contributes to loss of captured image quality. DSLR cameras
		usually have image stabilizers  either in the camera body or in the lenses.
		However, for cellphone and compact cameras, because of size
		limitations such mechanical stabilizers are hard to incorporate. In such scenarios,
		image processing algorithms are required to remove blurring artifacts
		in the post-processing stage. Mathematically image blurring can be well modelled as
		discrete convolution and image deblurring algorithms generally require solving
		an image deconvolution problem. 
		
		Depending on whether the Point Spread Function (PSF), aka blur kernel is known or otherwise, image deconvolution methods
		can be divided into two categories: non-blind image deconvolution where the PSF is exactly given and blind
		image deconvolution where only the blurred image is observed. The area of
		non-blind image deconvolution is mature with demonstrated practical success~\cite{Dong,BM3Dblur,Portilla,Chantas,Bioucas,Beck}. Blind image deconvolution on the other hand presents a more formidable practical challenge, since the blur kernel must be automatically estimated.
		
		\textbf{Related Works:} Early works in blind deconvolution include Ayers and Dainty's work in \cite{Ayers88} and the
		blind image deconvolution method based on Richardson-Lucy's method in
		\cite{RichardsonLucy95}. In~\cite{Ayers88} an
		iterative method similar to Wiener filter is employed in the Fourier domain, with a positivity constraint on the image, while
		in \cite{RichardsonLucy95}, a probabilistic solution based on Bayesian estimation is proposed.
		
		Recent blind deconvolution methods can be divided into two groups:  the first one follows an
		alternating minimization scheme \cite{Shan08,Krishnan11,Xu_2013_CVPR,Ren16}, i.e., solving for either the PSF or the image
		while fixing the other iteratively until convergence. The successes of these methods crucially rely on proper choices of regularizers.
		For instance, Shan {\it et al.}~\cite{Shan08} employed logarithmic density of the image
		gradient to exploit the edge information while Krishnan {\it et
			al.}~\cite{Krishnan11} employed a normalized sparsity measure.  Xu {\it et
			al.}~\cite{Xu_2013_CVPR} employed truncated $\ell_2$-norm as a practical
		approximation of the $\ell_0$-norm.  These methods are usually successful in recovering
		the latent images under relatively small motions; however, when confronted with large motions their performance may degrade due to edge distortions. Recently, Ren {\it et al.}~\cite{Ren16} designed a weighted nuclear norm to exploit the non-local patch similarities, especially along salient edge structures. However, this method may fail to produce clear results when rich textures are present. 
		
		The second
		one follows a two-stage scheme \cite{Fergus06,Pan16,Zuo16}, i.e., first estimating the PSF and then solving
		a non-blind deconvolution problem using the estimated kernel. A representative example is Fergus \textit{et.\ al}'s method \cite{Fergus06}. A drawback of it is the occasional ringing artifacts in the resulting images. More recent
		techniques~\cite{Xu2010,Xu_2013_CVPR,Cai12} focus on dealing with large motions by pre-processing the
		image gradients to filter out misleading edge information. Cho
		and Lee~\cite{SCho_deblur_2009} employed shock filter to detect salient edges, and a coarse-to-fine iterative refinement scheme to recover large kernels. Xu \textit{et al.} \cite{Xu2010} mask the edges based on a customized metric and recover the kernel using the masked edges only.
		
		There have been notable recent breakthroughs in understanding the optimization problem involved in solving for the blur kernel and the deblurred image. Levin \textit{et al.}~\cite{Levin11} recommend estimating the kernel before the latent image instead of jointly estimating both to rule out trivial solutions.
		Perrone \textit{et al.} in~\cite{Perrone16}
		verified this finding experimentally, and further found that
		alternating minimization over the total-variation (TV) regularized non-convex
		cost function can avoid converging to a trivial solution. Other works based on TV regularization include~\cite{Tofighi16,Chan98,Babacan09}.  From a general blind deconvolution perspective (not specifically image deblurring), Ahmed {\it et
			al.}~\cite{Ahmed14} proposed an analytical approach to convert blind
		deconvolution into a rank-one matrix recovery problem. Their work offers theoretical guarantees of recovery but is based on somewhat unrealistic prior knowledge about both the locations of nonzero PSF coefficients in pixel domain (blur kernel support) and the locations of primary coefficients of the image in certain transformation domains (image support). 
		
		\noindent \textbf{Motivation and Contributions:} From a performance standpoint, improving deblurred image quality in the face of large motion is an outstanding open challenge. The work of  Ahmed {\it et
			al.}~\cite{Ahmed14} offers promise but is hard to realize in practice due to the requirements of exact knowledge about image and PSF support. Motivated by this dichotomy, we develop a novel image deblurring method called Blind Image Deblurring using Row-Column Sparse Representations (BD-RCS). Like Ahmed {\em at al} \cite{Ahmed14}, our work formulates a rank-one matrix recovery problem but we set up {\em two new} optimization problems involving row and column sparsity to automatically determine blur kernel and image support respectively.  Note that in the analytical development, we don't pose any assumptions on the types and shapes of the kernel and thus BD-RCS is versatile across several practical blur models. In this work, we represent the image in the Haar wavelet domain but this transform is a flexible parameter in our work and its exact choice could be driven by the experimental scenario.
		
		\noindent \textbf{Reproducibility:} All the results in this paper are completely reproducible. We also share our code publicly at \cite{webpage}.\vspace{-0.2cm}
		
		\section{Proposed method}
		\label{sec:Proposed}
		\subsection{Formulation of Blind Deconvolution as Rank Minimization}
		\label{subsec:review}
		For completeness, we first briefly review the technique of formulating blind
		deconvolution as a rank-one matrix recovery problem originally proposed
		in~\cite{Ahmed14}. We also discuss its fundamental limitations towards
		practical applications. Recall the mathematical model for the blurred image:\vspace{-0.2cm}
		\begin{equation}
			\bm{y} = \bm{x}\ast\bm{k}+\bm{n},\vspace{-0.2cm}
			\label{problem}
		\end{equation}
		where $\bm{x}$ is the latent image, $\bm{k}$ is the blur kernel,
		$\bm{n}$ is the additive white Gaussian noise, and $\ast$ is the discrete 2D convolution. Let $\mathbf{B}\in\mathbb{R}^{L\times K}$ be a subsampled version of the identity matrix 
		and $\mathbf{C}\in\mathbb{R}^{L\times N}$ be the matrix which performs an $N$ dimensional reconstruction of $\mathbf x$ (using the basis vectors of the inverse Haar transform). Let $\mathbf{y} = \text{vec}(\bm{y})$, 
		$\mathbf{x} = \text{vec}(\bm{x}) \in \mathbb{R}^{L}$ , and $\mathbf{k} = \text{vec}(\bm{k})$ (zero-padded to length $L$). Consistent with \cite{Ahmed14}, $\mathbf x$ and $\mathbf k$ are assumed to be in subspaces of dimension $N$ and $K$ respectively with $N,K<L$:
		\begin{align}
			\mathbf{k} = \mathbf{Bh},\quad \mathbf{h}\in \mathbb{R}^K ~~\text{ and } ~~ \mathbf{x} = \mathbf{Cm},\quad \mathbf{m}\in \mathbb{R}^N.
			\label{eqn:dic}
		\end{align}
		By taking 2D discrete Fourier transform on both
		$\bm{x}$ and $\bm{k}$, we can rewrite (\ref{problem}) in the Fourier domain as:
		\begin{equation}
			\hat{\mathrm{y}}_l = \langle \hat{\mathbf{c}}_l, \mathbf{m}\rangle \langle \mathbf{h}, \hat{\mathbf{b}}_l\rangle = \text{Tr}(\mathbf{A}^H_l (\mathbf{h}\mathbf{m}^T)) \quad l=1,\dots,L,
			\label{eqn:lin_op}
		\end{equation}
		where $\mathbf{A}_l = \hat{\mathbf{b}}_l \hat{\mathbf{c}}_l^H$ and
		$\widehat{\cdot}$ is the Fourier transform of corresponding components. By lifting $\mathbf{h}$ and $\mathbf{m}$ to a rank-one matrix via $\mathbf{X}_0 := \mathbf{h}\mathbf{m}^T$, we can compactly represent~(\ref{eqn:lin_op}) as $\hat{\mathbf{y}} = \mathcal{A}(\mathbf{X}_0)$, for a linear operator $\mathcal{A}:\mathbb{R}^{K\times
			N}\longrightarrow\mathbb{C}^L$ defined elementwise in~(\ref{eqn:lin_op}). In this way, the problem
		reduces to a linear inverse problem over the non-convex set comprising rank-one matrices. However, directly solving this non-convex problem is NP-hard~\cite{candes_exact_2009},
		and for tractability, the following convex surrogate is solved instead in \cite{Ahmed14}:
		\begin{align}
			\label{eqn:convex}
			\min_{\mathbf{X}\in\mathbb{R}^{K\times N}} \|\mathbf{X}\|_{*} 
			\text{~~~subject to~~~} \quad \hat{\mathbf{y}} = \mathcal{A}(\mathbf{X}),
		\end{align}
		where $\|\bX\|_\ast$ is the nuclear norm of $\bX$ defined as the sum of
		the singular values of $\bX$.
		It was proven in~\cite{Ahmed14} that, under certain conditions (most
		notably $\max(K, N)\ll L$), $\mathbf{h}$ and $\mathbf{m}$ can be exactly
		recovered (up to scaling) by a singular value decomposition of the solution to~(\ref{eqn:convex}).
		\begin{figure}[t!]
			\begin{center}
				\includegraphics[width=53 mm]{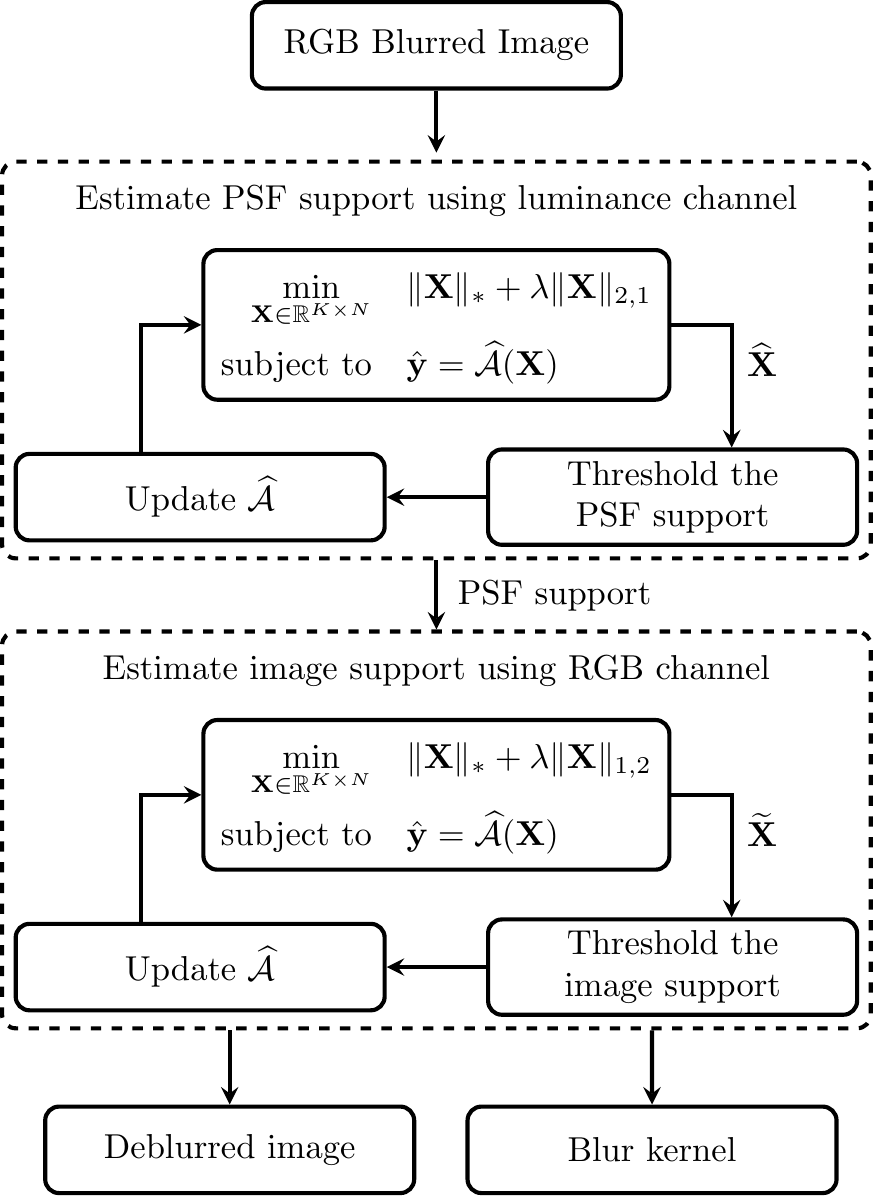}
				\vspace{-.1cm}
				\caption{Flowchart of Blind image Deconvolution using Row-Column Sparsity.}
				\vspace{-0.8cm}
				\label{fig:diagram}
			\end{center}
		\end{figure}
		
		Although this approach provides a promising direction in principle, some
		fundamental limitations prevent its applications to realistic image processing problems. In
		particular, in formulating~(\ref{eqn:convex}) a precise \emph{forward model} is
		implicitly assumed, i.e., $\mathcal{A}$ needs to be exactly determined in advance. However,
		such knowledge in turn requires the {\em exact support} \/of both the image and the blur kernel,
		which is not available in practice.\vspace{-3mm}
		
		\subsection{Tractable estimation of blur kernel and image support}
		\label{sec:bdiss}
		We first show how we detect the kernel support. Although kernel support is
		hard to obtain directly, it is common practice to initialize it with a bounding box
		containing all its nonzero coefficients (refer to Fig.\ 2 for a concrete example). Thus we let
		$\mathbf{h}\in\mathbb{R}^{K'}$ in~(\ref{eqn:dic}) be the vectorized bounding
		box and note correspondingly $\mathbf{B}\in\mathbb{R}^{L\times K'}$ with initial $K'\gg K$, i.e., a vector of all ones and form the corresponding lifted matrix $\bX_0\in\mathbb{R}^{K'\times N}$ as discussed in Section~\ref{subsec:review}. Nonetheless, in practice a sensible recovery of $\bX_0$ cannot be expected simply by solving~(\ref{eqn:convex}) due to the high ambient dimensionality of $\bX_0$; however, it turns out
		$\mathbf{h}$ can be well modeled as a sparse vector. This property translates to \emph{row sparsity} in the lifted matrix
		$\mathbf{X}_0$, i.e., a majority of the rows in
		$\mathbf{X}_0$ are zero vectors.
		To capture it, we
		employ the norm $\|\mathbf{X}\|_{2,1}:=\sum_{i=1}^K\|\mathbf{X}(i,:)\|_2$,
		i.e., sum of $\ell_2$-norms of rows in $\mathbf{X}$. As discussed
		in~\cite{liu_robust_2013}, this norm effectively promotes row sparsity.
		Instead of solving~(\ref{eqn:convex}), we solve the following
		convex optimization problem:
		\noindent
		\begin{align}
			\label{kernl}
			\widehat{\mathbf{X}}\gets\arg\min_{\mathbf{X}\in\mathbb{R}^{K\times N}} \|\mathbf{X}\|_{*} + \lambda\|\mathbf{X}\|_{2,1} \text{~s.t.~} \hat{\mathbf{y}} = \widehat{\mathcal{A}}(\mathbf{X}),
		\end{align}
		where $\widehat{\mathcal{A}}:\mathbb{R}^{K'\times N}\longrightarrow\mathbb{C}^L$ is 
		the linear operator similarly defined as~(\ref{eqn:lin_op}), and $\lambda>0$ is the constant parameter balancing the contributions of both terms and is practically determined via cross validation \cite{crossVal,monga2017}.
		
		Despite the fact that merely solving~(\ref{kernl}) is
		insufficient to yield an accurate estimation of the blur kernel, we
		may gather some partial information about it from the solution
		$\widehat{\mathbf{X}}$. Specifically, when certain rows of
		$\widehat{\mathbf{X}}$ are of significantly smaller magnitudes compared to the others, we may
		infer that those rows should not be included in the kernel support. Via hard thresholding\footnote{In our experiments we fix the threshold value to be half of the mean of the $\ell_2$-norms of each row of $\widehat{\bX}$. In most practical setups, usually $KN>2L$ and thus~(\ref{kernl}) and (\ref{image}) are feasible.} on
		the rows of $\widehat{\mathbf{X}}$, we can improve the forward
		model $\hat{\mathbf{y}} = \widehat{\mathcal{A}}(\mathbf{X}_0)$ by reducing $K'$ until convergence to the {\em true} support.
		Fig.~\ref{fig:psf_refine1} visually
		depicts this procedure for motion PSF of $l=10, \theta=30\degree$.
		
		To recover the image support, we simply note that $\mathbf{k}$ and $\mathbf{x}$
		play symmetric roles in our model. Just as initializing the kernel
		support with a bounding box, we aim to initialize the image support
		with a ``container''. Indeed, in Haar wavelet the locations of dominant image coefficients largely stays
		stable even under large motion blurring, and through mildly thresholding the blurred image coefficients, we can achieve moderate dimensionality
		reduction while maintaining most of the support information.
		With such initialization, we solve the following optimization problem:
		\begin{align}
			\label{image}
			\widetilde{\mathbf{X}}\gets\arg\min_{\mathbf{X}\in\mathbb{R}^{K\times N}} \|\mathbf{X}\|_{*} + \lambda\|\mathbf{X}\|_{1, 2} \text{~s.t.~} \hat{\mathbf{y}} = \widehat{\mathcal{A}}(\mathbf{X}),
		\end{align}
		where $\|\mathbf{X}\|_{1,2}:=\sum_{i=1}^N\|\mathbf{X}(:,i)\|_2$
		effectively promotes column sparsity.
		$\widehat{\mathcal{A}}$ is refined iteratively and a flowchart is shown in Fig.~\ref{fig:diagram}.
		
		\noindent\textbf{Remark:} To be consistent while competing with state of the art methods~\cite{Perrone16,Michaeli2014,Krishnan11,Cai12,Xu_2013_CVPR,Ren16}, we consider noise to be relatively small. Extension to dealing with significant amount of noise is possible by relaxing the equality constraint in (\ref{kernl}) and (\ref{image}) to $\|\hat{\mathbf{y}} - \widehat{\mathcal{A}}(\mathbf{X})\|_2\leq\epsilon$. Further, in finding the support of the PSF a hard-thresholding in each step is performed, and reasonable amount of noise will not influence the detected support. Incorporating the inequality constraint $\|\hat{\mathbf{y}} - \widehat{\mathcal{A}}(\mathbf{X})\|_2\leq\epsilon$ into the optimization problem is beyond the scope of this paper and is a direction for future work.\vspace{-0.5cm}

		\subsection{Efficient Optimization Algorithms}\vspace{-0.5mm}
		\label{subsec:optimization}
		We discuss how to solve~(\ref{kernl}) in detail; solution to~(\ref{image})
		follows analogously. It is difficult to directly solve
		(\ref{kernl}) using similar methods as discussed in~\cite{yin_dual_2015,yin_laplacian_2016}, primarily because of its non-smoothness and high
		dimensionality; however, we can equivalently solve an alternative smooth
		optimization problem of dramatically lower dimensionality, as stated
		formally in the following theorem:\vspace{-2mm}
		
		\begin{theorem}
			The minimizer of~(\ref{kernl}) can be found by solving
			\begin{align}
				&\inf_{\bZ_0\in\bR^{K\times r},\bH_0\in\bR^{N\times r},\bw\in\bR^K_+}\|\bZ_0\|_F^2+\|\bH_0\|_F^2\nonumber\\
				&+\sum_{i=1}^K\left(w_i+\lambda^2\frac{\|\be_i^T\bZ_0\bH_0^T\|_2^2}{w_i}\right)
				\quad\text{s.t. }\,\widehat{\by}=\widehat{\mathcal{A}}(\bZ_0\bH_0^T),
				\label{eqn:final_problem1}
			\end{align}
			to obtain solution $\widehat{\bZ}_0,\widehat{\bH}_0$ and letting $\widehat{\bX}=\widehat{\bZ}_0\widehat{\bH}_0^T$ whenever $r\geq\mathrm{rank}(\widehat{\bX})$ for \emph{every} minimizer $\widehat{\bX}$ of~(\ref{kernl})\footnote{A uniform upper bound for $r$ depending only on $K$ and $N$ is derived in~\cite{burer_local_2005}. Through cross-validation we choose $r =4$.}.
			\label{thm:convergence}
		\end{theorem}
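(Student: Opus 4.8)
\noindent\emph{Proof outline.} The plan is to exhibit~(\ref{eqn:final_problem1}) as the joint minimization obtained after replacing each of the two terms of the objective of~(\ref{kernl}) by a standard variational (lifted) representation and then interchanging the resulting infima. First I would invoke the factored characterization of the nuclear norm: for any $\bX\in\bR^{K\times N}$ and any $r\geq\mathrm{rank}(\bX)$,
\[
\norm{\bX}_*=\tfrac12\min\bigl\{\norm{\bZ_0}_F^2+\norm{\bH_0}_F^2:\ \bZ_0\in\bR^{K\times r},\ \bH_0\in\bR^{N\times r},\ \bZ_0\bH_0^T=\bX\bigr\}.
\]
The ``$\leq$'' direction follows from the standard inequality $\norm{\bZ_0\bH_0^T}_*\leq\norm{\bZ_0}_F\norm{\bH_0}_F$ together with AM--GM, applied to every factorization; ``$\geq$'' and attainment come from taking $\bZ_0=\bW\Sigma^{1/2}$, $\bH_0=\bV\Sigma^{1/2}$ out of a (zero-padded) SVD $\bX=\bW\Sigma\bV^T$. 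This is where the hypothesis on $r$ --- and the uniform bound of~\cite{burer_local_2005} --- enters.

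Second, I would use the elementary scalar identity $t=\min_{w>0}\tfrac12(w+t^2/w)$ for $t\geq0$, with minimizer $w=t$, applied row by row with $t=\lambda\norm{\bX(i,:)}_2$, to obtain
\[
\lambda\norm{\bX}_{2,1}=\sum_{i=1}^K\lambda\norm{\bX(i,:)}_2=\tfrac12\inf_{\bw\in\bR^K_+}\ \sum_{i=1}^K\Bigl(w_i+\lambda^2\,\frac{\norm{\bX(i,:)}_2^2}{w_i}\Bigr),
\]
under the convention $0/0=0$. The infimum over $\bw$ is attained on rows with $\bX(i,:)\neq\0$ but is only approached as $w_i\downarrow0$ on zero rows --- which is precisely why the theorem is phrased with an $\inf$.

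Next I would substitute: on the feasible set of~(\ref{kernl}), writing $\bX=\bZ_0\bH_0^T$ gives $\bX(i,:)=\be_i^T\bZ_0\bH_0^T$ and $\widehat{\mathcal{A}}(\bX)=\widehat{\mathcal{A}}(\bZ_0\bH_0^T)$, so that, combining the two displays, for each feasible $\bX$,
\[
2\bigl(\norm{\bX}_*+\lambda\norm{\bX}_{2,1}\bigr)=\inf_{\bZ_0\bH_0^T=\bX,\ \bw\in\bR^K_+}\!\Bigl(\norm{\bZ_0}_F^2+\norm{\bH_0}_F^2+\sum_{i=1}^K\bigl(w_i+\lambda^2\tfrac{\norm{\be_i^T\bZ_0\bH_0^T}_2^2}{w_i}\bigr)\Bigr),
\]
the inner infimum splitting because $\be_i^T\bZ_0\bH_0^T=\bX(i,:)$ does not depend on the chosen factorization. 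Minimizing the left-hand side over all $\bX$ with $\widehat{\by}=\widehat{\mathcal{A}}(\bX)$ then equals, after interchanging the nested infima, the infimum of the right-hand side over all $(\bZ_0,\bH_0,\bw)$ with $\widehat{\by}=\widehat{\mathcal{A}}(\bZ_0\bH_0^T)$, i.e.\ twice the optimal value of~(\ref{eqn:final_problem1}). Since the two objectives are thus proportional, their minimizer sets correspond: a minimizer $(\widehat{\bZ}_0,\widehat{\bH}_0)$ of~(\ref{eqn:final_problem1}) yields the minimizer $\widehat{\bX}=\widehat{\bZ}_0\widehat{\bH}_0^T$ of~(\ref{kernl}), and conversely every minimizer $\widehat{\bX}$ of~(\ref{kernl}), being of rank at most $r$, factors into an optimal $(\bZ_0,\bH_0)$.

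I expect the delicate point to be making this last equivalence rigorous rather than the algebra: one must check that the inner infimum over factorizations $\bZ_0\bH_0^T=\bX$ is genuinely \emph{attained} (this is exactly where $r\geq\mathrm{rank}(\widehat{\bX})$ for \emph{every} minimizer $\widehat{\bX}$ is needed), that the non-attainment of the $w_i$-minimization on vanishing rows does not prevent the recovered product $\widehat{\bZ}_0\widehat{\bH}_0^T$ from being an \emph{exact} minimizer of~(\ref{kernl}) (fix an optimal $(\bZ_0,\bH_0)$ first, then let the offending $w_i\downarrow0$), and that the interchange of infima together with the $0/0$ convention is legitimate. Everything else is a routine computation with the two identities above.
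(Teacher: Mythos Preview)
Your argument is correct, but it follows a genuinely different route from the paper's own proof. The paper proceeds via SDP duality in the style of~\cite{recht_guaranteed_2010}: it first writes $\|\bX\|_\ast+\lambda\|\bX\|_{2,1}$ through dual norms as a maximization problem, casts that maximization as a block-diagonal SDP, takes its dual to obtain a trace-minimization over a large positive-semidefinite matrix $\overline{\bX}$, and then applies the Burer--Monteiro factorization to $\overline{\bX}$. Auxiliary factor variables $\bz_i,\bH_i$ (one pair per row) arise from this lift and are subsequently eliminated in closed form to arrive at~(\ref{eqn:final_problem1}). By contrast, you bypass the SDP machinery entirely by invoking two off-the-shelf variational identities --- the factored form $\|\bX\|_\ast=\tfrac12\min_{\bZ\bH^T=\bX}(\|\bZ\|_F^2+\|\bH\|_F^2)$ and the scalar AM--GM identity $t=\tfrac12\inf_{w>0}(w+t^2/w)$ --- and combining them directly. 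Your route is considerably shorter and makes the role of the rank hypothesis $r\geq\mathrm{rank}(\widehat{\bX})$ transparent (it is needed only to factor the minimizer, not arbitrary feasible points), whereas the paper's derivation ties $r$ to the rank of the larger lifted matrix $\overline{\bX}$ and leans on~\cite{burer_local_2005} for a uniform bound. The SDP route, however, is more systematic and would generalize mechanically to other composite regularizers whose dual-norm constraints are SDP-representable; your argument depends on knowing the two specific variational identities in advance. One small point worth tightening in your write-up: when you ``interchange the nested infima'', the inner identity for $\|\bX\|_\ast$ is vacuous when $\mathrm{rank}(\bX)>r$, so the clean way to argue equality of optimal values is the two one-sided inequalities you implicitly use (any feasible $(\bZ_0,\bH_0,\bw)$ dominates $f^\ast$; the SVD factorization of a minimizer $\widehat{\bX}$ together with $w_i\downarrow\lambda\|\widehat{\bX}(i,:)\|_2$ achieves it), rather than a literal interchange.
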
\vspace{-1mm}
		
		Proof of Theorem~\ref{thm:convergence} is included in the supplementary document. To solve~(\ref{eqn:final_problem1}) we adopt augmented Lagrangian multiplier method. The augmented Lagrangian is \vspace{-0.1cm}
		\begin{align}
			&\cL_{\sigma,\widehat{\mathcal{A}}}(\bZ_0,\bH_0,\bw;\bm{\alpha})=\|\bZ_0\|_F^2+\|\bH_0\|_F^2-2\bm{\alpha}^H(\widehat{\mathcal{A}}(\bZ_0\bH_0^T)-\widehat{\by})\nonumber\\
			&+\sigma\|\widehat{\mathcal{A}}(\bZ_0\bH_0^T)-\widehat{\by}\|_2^2+\sum_{i=1}^K\left(w_i+\lambda^2\frac{\|\be_i^T\bZ_0\bH_0^T\|_2^2}{w_i}\right),\label{eqn:lagrangian}\vspace{-0.25cm}
		\end{align}
		which is smooth and can be efficiently minimized by the limited memory
		Broyden-Fletcher-Goldfarb-Shanno (L-BFGS) method widely applied to
		large scale nonlinear optimization problems \cite{BFGS}. Instead of jointly
		solving for $(\bZ_0, \bH_0, \bw)$, we alternate between solving for
		$(\bZ_0,\bH_0)$ and $\bw$; more details can be found in the supplementary document. The
		algorithm for solving~(\ref{kernl}) is summarized in
		Algorithm~\ref{alg:almK} ($\sigma_0 = 1e3$, $\varepsilon_0 = 1e-4$ and $\rho = 10$, determined for best convergence rate). The convergence analysis of Algorithm~\ref{alg:almK} is available in \cite{convergence}. The algorithm for solving~(\ref{image}) can be analogously derived.\vspace{-2.5mm}

		\begin{figure}[t!]
			\begin{center}\vspace{0cm}
				\includegraphics[width=0.49\textwidth]{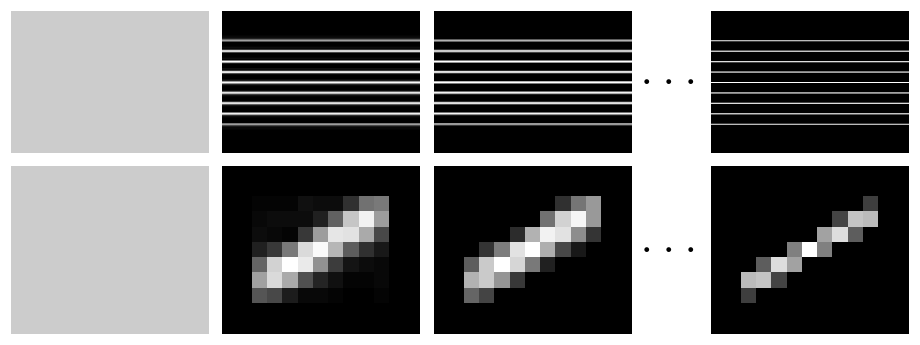}\label{fig:2a}\vspace{-3mm}
				\caption{$\widehat{\mathbf{X}}$ in selected intermediate steps (top row) and the corresponding estimated kernels (bottom row).}\vspace{-5mm}
				\label{fig:psf_refine1}
			\end{center}
		\end{figure}

		\begin{algorithm}[h!]
			\renewcommand{\algorithmicrequire}{\textbf{Input:}}
			\renewcommand{\algorithmicensure}{\textbf{Output:}}
			\caption{\small Augmented Lagrangian Algorithm for Solving~(\ref{kernl})}
			
			\begin{algorithmic}[1]
				\REQUIRE Parameters $\lambda, \rho, \sigma_0, \varepsilon_0\in\mathbb{R}^+$.
				\STATE $\bZ^0\gets\text{uniform kernel}$, $\bH^0\gets\text{blurred image coefficients}$.
				\STATE $\bw^0\gets(\lambda\|\be_i^T\bZ^0\bH^{0T}\|_2 + \varepsilon_0)_{i=1}^K$, $\bm{\alpha}^0\gets\0$.
				\FOR{$k=0$ \TO $I$}
				\STATE $\begin{bmatrix}
				\bZ^{k+1}\\ \bH^{k+1}
				\end{bmatrix}\gets\text{L-BFGS}(\cL_{\sigma_k,\widehat{\mathcal{A}}}(\bZ,\bH,\bw_i;\bm{\alpha}_i))$.
				
				\STATE $\bw^{k+1}\gets(\lambda\|\be_i^T\bZ^{k+1}\bH^{k+1T}\|_2)_{i=1}^K + \frac{\varepsilon_0}{(k+2)^2}$.
				\STATE $\bm{\alpha}^{k+1}\gets\bm{\alpha}^k-\sigma_k(\widehat{\cA}(\bZ^{k+1}\bH^{k+1T})-\widehat{\by})$.
				\STATE $\sigma_{k+1}\gets\rho\sigma_k$.
				\ENDFOR
				\ENSURE $\widehat{\bX}\gets\bZ^{I+1}\bH^{I+1T}$.
			\end{algorithmic}
			\label{alg:almK}
		\end{algorithm}\vspace{-5mm}
		
		\section{Experimental Results}\vspace{-0.5mm}
		\label{sec:Results}
		We evaluate BD-RCS by experimenting on images taken from~\cite{Kohler2012}, with
		motion blur kernels of length from $l=7$ to $l=20$ in different directions, and a Gaussian blur kernel of radius $h=15$ and standard deviation $\sigma = 1.5$ (Fig.~\ref{fig:rowGaus}) to model various realistic scenarios. Note that these PSFs are 2D, although we are using their vectorized form. The test images are included in Fig.~\ref{fig:originals} in the supplementary document, labeled Im1 -- Im5. We compare against seven state-of-the-art methods for blind
		image deconvolution in this paper: TVBD \cite{Perrone16}, BDIRP
		\cite{Michaeli2014}, BDNSM \cite{Krishnan11}, TPKE \cite{Xu2010}, HQMD
		\cite{Shan08}, FBMD \cite{Cai12} and UNL0 \cite{Xu_2013_CVPR}. For quantitative
		evaluations, we compute two widely-used metrics: the Signal-to-Noise Ratio
		(SNR) and the Improvement in Signal-to-Noise (ISNR), both in
		dB. ISNR is given by: $\textrm{ISNR} = 10\times \mathrm{log}_{10}\left(\frac{\|\mathbf{y}-\mathbf{x}\|^2}{\|\mathbf{x}_{\mathrm{rec}}-\mathbf{x}\|^2}\right),$
		where $\mathbf{y}$ is the blurry image, $\mathbf{x}$ is the original image, and
		$\mathbf{x}_{\mathrm{rec}}$ is the reconstructed image. 
		We initialize the PSF with a bounding box containing all ones. We experimentally found the
		performance of our method robust to variations of the size of the bounding box, so long as it is conservatively 
		selected to enclose the actual support of the blur kernel. In contrast, UNL0,
		BDNSM, HQMD, and BDRIP require the exact size of the kernel to produce the aforementioned results. 
		
		The numerical results for all the aforementioned methods are presented in
		Table~\ref{my-label} for motion PSFs of angle
		$\theta = 30\degree$ and lengths $l = 7, 10, 14$.
		The corresponding original, blurred, and recovered images for $l=10$ are included in the supplementary document as Fig. \ref{fig:row}.
		As indicated by the numerical values, when motion gets larger, all algorithms except for TVBD \cite{Perrone16}, TPKE
		\cite{Xu2010} and BD-RCS fail to plausibly recover the latent images.
		This fact is further supported by the visual results in the supplementary document.
		In particular, for kernels of length $l=10, 14$, and angle $\theta=30\degree$, the
		recovered images for FBMD \cite{Cai12}, UNL0 \cite{Xu_2013_CVPR},  BDNSM
		\cite{Krishnan11}, HQMD \cite{Shan08}, BDRIP \cite{Michaeli2014} suffer
		from either blurring or ringing artifacts. The
		results for BDRIP \cite{Michaeli2014} are typically over-smoothed as also
		reflected by their poor SNR and ISNR values in Table~\ref{my-label}. This can be explained by the wide supports in their reconstructed
		kernels. In contrast, BD-RCS achieves both the highest SNR and ISNR scores in each case, and the sharpest recovered images with the least amount of artifacts. Because only the results for TPKE and TVBD are
		comparable to BD-RCS for $l\geq 10$, we present results only for these methods. The numerical results are presented in Table \ref{my-label1} and one example of this case is shown in Fig. \ref{fig:r4}. Results for the remaining images are included in our supplementary document in \cite{SupDoc}.
		
		As another challenging scenario for image deblurring we present results for an image blurred with a Gaussian kernel. This kind of blur occurs in cameras, microscopes, etc. as a result of defocussing \cite{blindBook}. 
		Fig. \ref{fig:rowGaus} shows visual comparisons of BD-RCS against selected competing methods that are known to be generally applicable to any blur type \cite{Michaeli2014,Krishnan11,Perrone16}.
		
		In presence of more generic motion blur (as in \cite{Levin11}), BD-RCS cannot accurately recover the kernel while adopting Haar wavelet for building $\mathbf{C}$. Haar wavelet is not a very efficient basis for sparse coding natural images. However, BD-RCS is versatile and by adopting a customized basis (e.g. learning a dictionary to build $\mathbf{C}$), it can handle more generic blurs (both in the sense of convergence and PSF recovery, as we have no assumptions on kernel type/shape in our analysis of convergence). This is a topic for future research.
		
		For a $256\times 256$ image the run times\footnote{All methods (except for TPKE and HQMD that are implemented in C and hence excluded for complexity comparisons) were implemented in MATLAB on a computer with an Intel core i7-2600, 3.40 GHz CPU and 8 GB of RAM\@.} are as follows: BDNSM: 68 sec,
		FMBD: 94 sec, UNL0: 296 sec, BDIPR: 300 sec, BD-RCS: 381 sec, and TVBD: 587
		sec. Clearly, BD-RCS enables a more favorable performance-complexity trade-off than competing methods.\vspace{-4mm}

		\begin{table}[t!]
			\centering
			\caption{SNR and ISNR results for image Im3 with motion blur kernel of angle $\theta=30\degree$ and length $l = 7, 10, 14$}\vspace{-0.3cm}
			\label{my-label}
			\scalebox{0.68}{
				\begin{tabular}{|c|c|c|c|c|c|c|c|c|}
					\hline
					\multirow{2}{*}{\textbf{PSF}} & \multicolumn{2}{c|}{\textbf{BD-RCS}}  & \multicolumn{2}{c|}{\textbf{TVBD}} & \multicolumn{2}{c|}{\textbf{TPKE}} & \multicolumn{2}{c|}{\textbf{BDNSM}} \\ \cline{2-9} 
					& SNR& ISNR& SNR& ISNR& SNR& ISNR& SNR& ISNR\\ \hline
					\textbf{7}& \textbf{22.96}& \textbf{4.28}& 22.44& 3.26& 20.18& 1.50& 20.19& 1.51\\ \hline
					\textbf{10}& \textbf{22.01}& \textbf{4.93}& 20.31& 3.23& 19.01& 1.94& 19.17& 2.09\\ \hline
					\textbf{14}& \textbf{21.08}& \textbf{5.54}& 17.02& 1.48& 16.93& 1.39& 15.14& -0.40\\ \hline
					& \multicolumn{2}{c|}{\textbf{NUL0}} & \multicolumn{2}{c|}{\textbf{HQMD}} & \multicolumn{2}{c|}{\textbf{FMBD}}& \multicolumn{2}{c|}{\textbf{BDIPR}}  \\ \hline
					\textbf{7}& 19.59& 0.91& 19.96& 1.28& 18.32& -0.36& 17.84& -0.84\\ \hline
					\textbf{10}& 17.60& 0.52& 18.01& 0.92& 16.93& -0.20& 15.40& -1.67\\ \hline
					\textbf{14}& 16.44& 0.89& 16.21& 0.67& 15.19& -0.35& 13.75& -1.80\\ \hline
				\end{tabular}}\vspace{-.5cm}
			\end{table}

			\begin{table}[t!]
				\centering
				\caption{SNR and ISNR results for motion blur kernel of angle $\theta=30\degree$ and length $l = 15, 20$
					for images Im1 -- Im4}
				\vspace{-0.3cm}
				\label{my-label1}
				\scalebox{0.7}{
					\begin{tabular}{|c|c|c||c|c|c|c|c|c|}
						\hline
						\multirow{2}{*}{\textbf{Image}} & \multicolumn{2}{c||}{\textbf{Non-blind}}  & \multicolumn{2}{c|}{\textbf{BD-RCS}} & \multicolumn{2}{c|}{\textbf{TVBD}} & \multicolumn{2}{c|}{\textbf{TPKE}} \\ \cline{2-9} 
						& SNR& ISNR& SNR& ISNR& SNR& ISNR& SNR& ISNR\\ \hline
						\textbf{Im1} ($l=15$)& 24.43& 12.49& \textbf{17.66}& \textbf{5.72}& 14.00& 2.06& 15.91& 3.97\\ \hline
						\textbf{Im2} ($l=20$)& 23.30& 14.09& \textbf{16.10}& \textbf{6.87}& 11.88& 2.65& 12.51& 3.29\\ \hline
						\textbf{Im3} ($l=15$)& 27.48& 13.54& \textbf{21.06}& \textbf{7.14}& 14.06& 0.14& 16.44& 2.52\\ \hline
						\textbf{Im4} ($l=20$)& 23.36& 12.84& \textbf{16.95}& \textbf{6.40}& 15.34& 4.79& 12.19& 1.64\\ \hline
					\end{tabular}}\vspace{-0.3cm}
				\end{table}

				
				\begin{figure}[ht!]
					\begin{center}
						\includegraphics[width=88mm]{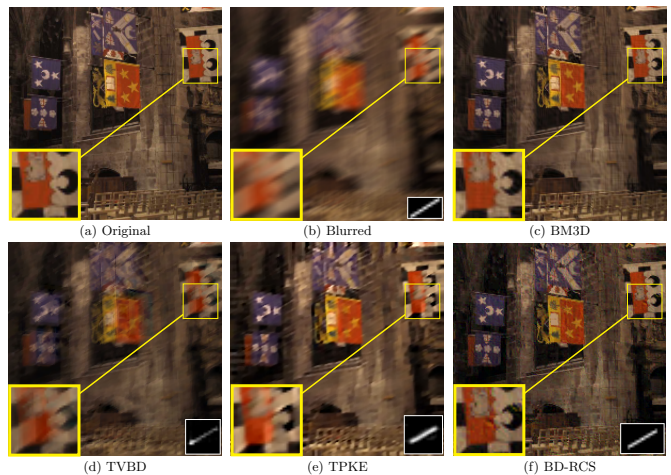}\vspace{-.35cm}
						\caption{a) Original image (Im1), b) Original PSF ($l=20, \theta=30\degree$) and blurred image, c) BM3D non-blind deblurring \cite{BM3Dblur}; SNR = 23.36 dB, ISNR = 12.84 dB, d) TVBD; SNR = 15.34 dB, ISNR = 4.79, e) TPKE; SNR = 12.19 dB, ISNR = 1.64 dB, f) BD-RCS; SNR = 16.95 dB, ISNR = 6.40 dB.\vspace{-5mm}}\label{fig:r4}
					\end{center}\vspace{-0.15cm}
				\end{figure}
				
				\begin{figure}[ht!]
					\begin{center}
						\includegraphics[width=88mm]{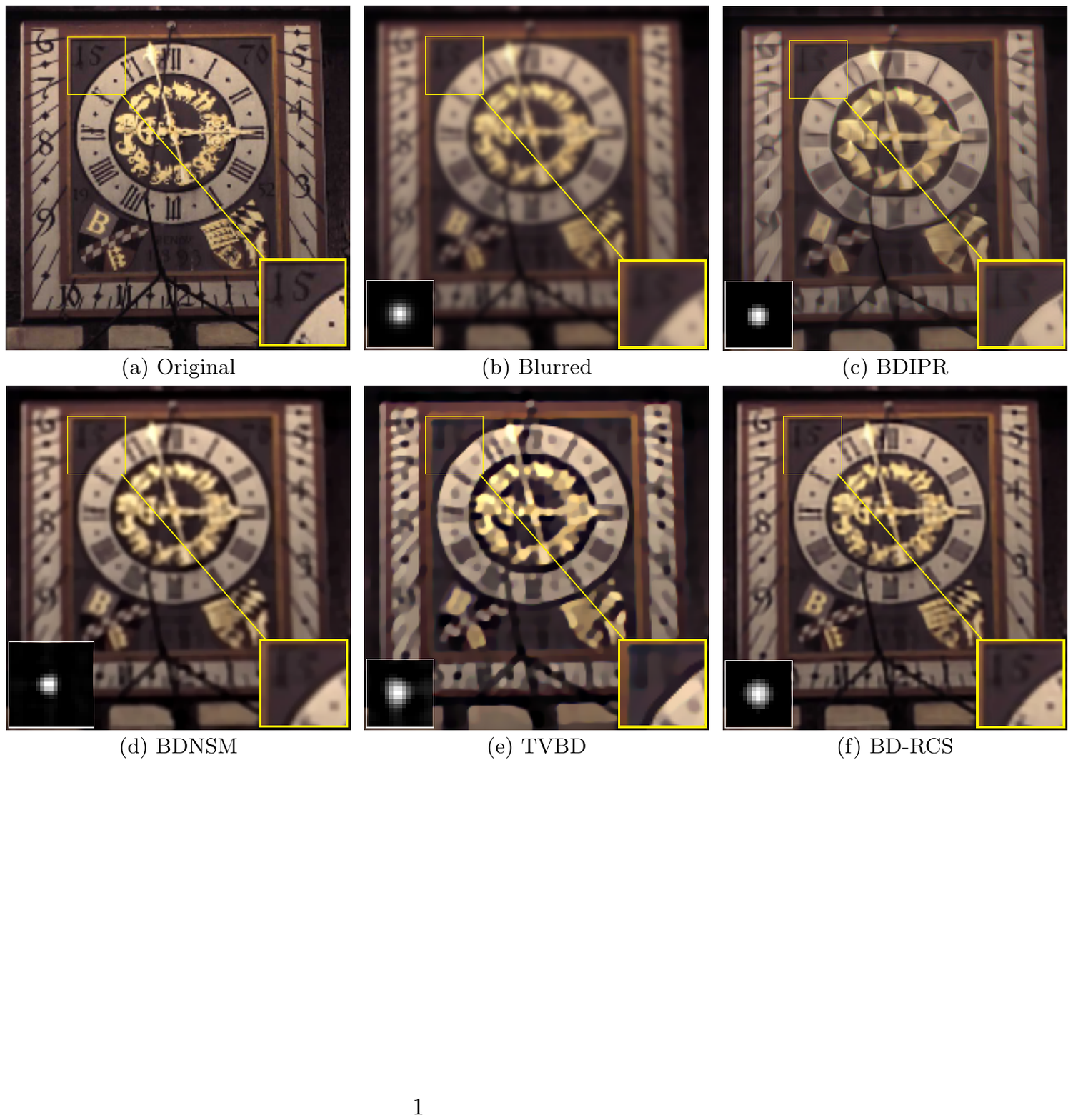}\vspace{-.35cm}
						\caption{a) Original image (Im2), b) Gaussian PSF ($h = 15$ and $\sigma = 1.5$) and blurred image, c) BDIRP \cite{Michaeli2014}; SNR = 13.84 dB, d) BDNSM \cite{Krishnan11}; SNR=13.53 dB, e) TVBD \cite{Perrone16}; SNR = 12.22 dB, and f) BD-RCS; SNR = 16.68 dB.\vspace{-6mm}} \label{fig:rowGaus}
					\end{center}\vspace{-0.4cm}
				\end{figure}
				
				\section{Conclusion}\vspace{-1mm}
				\label{sec:refs}
				We present a novel blind image deblurring method based on structured sparse representations. Our central contribution is to develop a practical realization of a principled rank minimization framework for deconvolution by setting up practical and tractable sparsity constrained optimization problems, enabling  accurate estimation of the blur kernel and image support. The proposed BD-RCS achieves a favorable cost-quality trade-off against state of the art approaches. Our work first estimates the blur kernel support followed by solving for the deblurred image. Algorithmic extensions could include performing these two steps in an alternating fashion until convergence.

				
				\bibliographystyle{IEEEbib}
				\bibliography{PhdReferences}

				\newpage
				
				\section{Solution of Optimization Using Duality}
				\label{sec:Proof}
				
				\emph{Proof of Theorem \ref{thm:convergence}}: The basic idea is technically similar
				to~\cite{recht_guaranteed_2010}, although the derivations are more involved due
				to the addition of $\|\mathbf{X}\|_{2,1}$. First we need the following
				proposition:
				\begin{proposition}
					The dual norm of $\|\bX\|_{2, 1}$ is $\|\bX\|_{2,\infty}:=\max_{1\leq i\leq K}\|\be_i^T\bX\|_2$.
				\end{proposition}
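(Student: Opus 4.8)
\emph{Proof proposal.} The plan is to work straight from the definition of the dual norm, $\norm{\bX}_{2,1}^{\ast}=\sup\{\langle\bX,\bY\rangle:\norm{\bY}_{2,1}\le 1\}$ with $\langle\bX,\bY\rangle=\tr(\bX^T\bY)=\sum_{i=1}^{K}\langle\bX(i,:),\bY(i,:)\rangle$. The key structural observation is that this pairing splits row-by-row, and that $\norm{\cdot}_{2,1}$ is simply the $\ell_1$-norm of the vector of row $\ell_2$-norms $(\norm{\bY(1,:)}_2,\dots,\norm{\bY(K,:)}_2)$; one therefore expects the dual to inherit the standard $\ell_1$--$\ell_\infty$ duality in the row index together with the self-duality of $\ell_2$ within each row, i.e.\ the mixed norm $\norm{\bX}_{2,\infty}=\max_{1\le i\le K}\norm{\be_i^T\bX}_2$.

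For the upper bound $\norm{\bX}_{2,1}^{\ast}\le\norm{\bX}_{2,\infty}$, apply Cauchy--Schwarz in each row to get $\langle\bX,\bY\rangle\le\sum_{i=1}^{K}\norm{\bX(i,:)}_2\,\norm{\bY(i,:)}_2\le\bigl(\max_{1\le i\le K}\norm{\bX(i,:)}_2\bigr)\sum_{i=1}^{K}\norm{\bY(i,:)}_2=\norm{\bX}_{2,\infty}\norm{\bX_{\!}}\cdot$ — more precisely $\le\norm{\bX}_{2,\infty}\,\norm{\bY}_{2,1}$ — and then take the supremum over $\norm{\bY}_{2,1}\le1$. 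For the matching lower bound I would exhibit an explicit maximizer: assuming $\bX\neq\0$ (the case $\bX=\0$ is trivial), fix an index $i^{\star}$ attaining $\max_i\norm{\bX(i,:)}_2$ and let $\bY$ be the matrix whose $i^{\star}$-th row equals $\bX(i^{\star},:)/\norm{\bX(i^{\star},:)}_2$ and all of whose other rows are zero. Then $\norm{\bY}_{2,1}=1$ and $\langle\bX,\bY\rangle=\norm{\bX(i^{\star},:)}_2=\norm{\bX}_{2,\infty}$, so the supremum is attained and equals $\norm{\bX}_{2,\infty}$, completing the identification.

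This argument is essentially routine; the only points requiring a little care — and the closest thing to an obstacle — are the degenerate rows (a zero row of $\bX$ contributes nothing to the pairing, and the argmax row being zero forces $\bX=\0$) and making sure the sign/normalization conventions in the definition of the dual norm match those used later in the duality-based derivation of Theorem~\ref{thm:convergence}. The identical two-line argument, with rows replaced by columns, simultaneously gives that the dual of $\norm{\cdot}_{1,2}$ is $\norm{\bX}_{1,\infty}:=\max_{1\le j\le N}\norm{\bX(:,j)}_2$, which is what is needed to handle~(\ref{image}).
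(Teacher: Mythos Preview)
Your proof is correct and follows essentially the same approach as the paper: the upper bound via row-wise Cauchy--Schwarz followed by H\"older on the row-norm vector, and the lower bound by exhibiting the explicit maximizer supported on a single argmax row. The paper's proof is identical in structure (it does not separately treat the $\bX=\0$ case or state the column analogue, but those are minor additions on your part).
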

				\begin{proof}
					$\forall\bY\in\bR^{K\times N}:\|\bY\|_{2, 1}=1,$
					\begin{align}
						\langle\bX,\bY\rangle_F&=\sum_{i=1}^K\langle\bX_{i,:},\bY_{i,:}\rangle\overset{\mathrm{i}}{\leq}\sum_{i=1}^K\|\bX_{i,:}\|_2\|\bY_{i,:}\|_2\nonumber\\
						&\leq\|\bX\|_{2,\infty}\|\bY\|_{2, 1}=\|\bX\|_{2,\infty}.
						\label{proof1}
					\end{align}
					where in i we invoke Cauchy-Schwartz inequality. From the definition of dual norm
					\begin{align}
						\left[\|\bX\|_{2,1}\right]_D=\sup_{\|\bY\|_{2,1}=1}\langle\bX,\bY\rangle_F\leq\|\bX\|_{2,\infty}
					\end{align}
					where $[\|\cdot\|]_D$ denotes the dual norm of $\|\cdot\|$. On the other hand, $\forall\bX\in\bR^{K\times N}$, pick $i_0\in\arg\max_i\|\bX_{i,:}\|_2$, define $\widetilde{\bY}$ as
					\begin{align}
						\widetilde{\bY}_{i,:}=
						\begin{cases}
							\frac{\bX_{i_0,:}}{\|\bX_{i_0,:}\|_2} & i=i_0,\\
							\mathbf{0} & i\neq i_0,
						\end{cases}
					\end{align}
					to establish $[\|\bX\|_{2,1}]_D=\sup_{\|\bY\|_{2,1}=1}\langle\bX,\bY\rangle_F\geq\langle\bX,\widetilde{\bY}\rangle_F=\|\bX\|_{2,\infty}$.
				\end{proof}
				
				Also recall the dual norm of $\|\bX\|_\ast$ is $\|\mathbf{X}\|_2$, i.e., the maximum singular value of $\mathbf{X}$. Using the definition of dual norm, we may reexpress $\|\bX\|_\ast+\lambda\|\bX\|_{2,1}$ as solution to the following optimization problem
				\begin{align}
					&\max_{\bY_1,\bY_2\in\bR^{K\times N}}\langle\bX,\bY_1\rangle_F+\lambda\langle\bX,\bY_2\rangle_F\nonumber\\
					&\text{subject to~}\|\bY_1\|_2\leq 1,~\|\bY_2\|_{2,\infty}\leq 1,
				\end{align}
				and rewrite~(\ref{kernl}) as the following minimax problem:
				\begin{align}
					&\min_{\bX\in\mathbb{R}^{K\times N}}\max_{\bY_1,\bY_2\in\mathbb{R}^{K\times N}}\langle\bX,\bY_1\rangle_F+\lambda\langle\bX,\bY_2\rangle_F\nonumber\\
					&\text{subject to~}\|\bY_1\|_2\leq 1,\|\bY_2\|_{2,\infty}\leq1, \widehat{\mathbf{y}}=\mathcal{A}(\bX).
				\end{align}
				To simplify this, let us first focus on the maximization problem:
				\begin{align}
					(P_1):\max_{\bY_1,\bY_2\in\bR^{K\times N}}&\left\langle
					\begin{pmatrix}
						\bX\\
						\lambda\bX
					\end{pmatrix},
					\begin{pmatrix}
						\bY_1\\
						\bY_2
					\end{pmatrix}\right\rangle_F\nonumber\\
					\text{subject to }&\|\bY_1\|_2\leq 1,~\|\bY_2\|_{2,\infty}\leq 1.
				\end{align}
				
				$(P_1)$ is equivalent to the following semi-definte programming (SDP) problem
				\begin{align}
					(P_2):&\max_{\bY_1,\bY_2\in\bR^{K\times N}}\left\langle
					\begin{pmatrix}
						\bX\\\lambda\bX
					\end{pmatrix},
					\begin{pmatrix}
						\bY_1\\\bY_2
					\end{pmatrix}\right\rangle_F\nonumber\\
					\text{subject to ~~}&
					\tiny{\begin{bmatrix}
							\bI_K & \bY_1 & & & & & \\
							\bY_1^T & \bI_N & & & & \bigzero &\\
							& & 1 & \be_1^T\bY_2 & & &\\
							& & \bY^T_2\be_1 & \bI_N & & & \\
							& & & &\ddots & & \\
							& & \bigzero & & & 1 & \be_K^T\bY_2\\
							& & & & & \bY_2^T\be_K & \bI_N
						\end{bmatrix}\succeq\mathbf{0}},
				\end{align}
				and the following dual program
				\begin{align}
					(P_2^\ast):&\min_{\overline{\bX}, \bW_0, \{w_i\}_{i=1}^K, \{\bV_i\}_{i=0}^K}\tr(\overline{\bX})\nonumber\\
					\text{s.t.~}&
					\tiny{\overline{\bX}=\begin{bmatrix}
							\bW_0 & \bX & & & & & \\
							\bX^T & \bV_0 & & & & \bigast &\\
							& & w_1 & \lambda\be_1^T\bX & & &\\
							& & \lambda\bX^T\be_1 & \bV_1 & & & \\
							& & & &\ddots & & \\
							& &\bigast & & & w_K & \lambda\be_K^T\bX\\
							& & & & & \lambda\bX^T\be_K & \bV_K
						\end{bmatrix}\succeq\mathbf{0}},
					\label{eqn:dual}
				\end{align}
				where
				$\overline{\bX}\in\bR^{2K+(K+1)N\times2K+(K+1)N},\bW_0\in\bR^{K\times
					K},\{w_i\in\bR\}_{i=1}^K,\{\bV_i\in\bR^{N\times N}\}_{i=0}^K$, $\bI_k$
				is the $K\times K$ identity matrix and $\be_i$ denotes the $i$-th
				standard basis vector.
				
				We apply the \emph{Burer-Monteiro Factorization}~\cite{burer_nonlinear_2003} to eliminate the positive semi-definite constraint and reduce the dimensionality.
				\begin{align}
					\overline{\bX}=
					\begin{bmatrix}
						\bZ_0\\\bH_0\\\bz_1^T\\\bH_1\\\vdots\\\bz_K^T\\\bH_K
					\end{bmatrix}
					\begin{bmatrix}
						\bZ_0\\\bH_0\\\bz_1^T\\\bH_1\\\vdots\\\bz_K^T\\\bH_K
					\end{bmatrix}^T{\text{where }
					\bZ_0\in\bR^{K\times r},\bH_i\in\bR^{N\times r}, \bz_i\in\bR^r},
				\label{eqn:factor}
			\end{align}
			for some $r\geq \mathrm{rank}(\overline{\bX})$. Equating two expressions of $\overline{\bX}$ in~(\ref{eqn:dual}) and~(\ref{eqn:factor}) yields
			\begin{align}
				&\bZ_0\bZ_0^T=\bW_0,~\bH_0\bH_0^T=\bV_0,\nonumber\\
				&\|\bz_i\|_2^2=w_i,~\bH_i\bH_i^T=\bV_i,~\bZ_0\bH_0^T=\bX,\nonumber\\
				&\bz_i^T\bH_i^T=\lambda\be_i^T\bX=\lambda\be_i^T\bZ_0\bH_0^T,~i=1,2, \dots, K.
				\label{eqn:equality}
			\end{align}
			Therefore, we can rewrite $\tr(\overline{\bX})$ as:
			\begin{align}
				\tr(\widetilde{\bX})&=\tr(\bW_0)+\tr(\bV_0)+\sum_{i=1}^Tw_i+\sum_{i=1}^K\tr(\bV_i)\nonumber\\
				&=\|\bZ_0\|_F^2+\|\bH_0\|_F^2+\sum_{i=1}^K w_i+\sum_{i=1}^K\|\bH_i\|_F^2.
				\label{eqn:trace}
			\end{align}
			Plugging~(\ref{eqn:equality}) and~(\ref{eqn:trace}) back into $(P_1)$, we get
			\begin{align}
				(P_1^\ast):&\min_{\bZ_0\in\bR^{K\times r},\{\bH_i\in\bR^{N\times r}\}_{i=0}^K,\{\bz_i\in\bR^r\}_{i=1}^K}\|\bZ_0\|_F^2\nonumber\\
				&+\|\bH_0\|_F^2+\sum_{i=1}^K w_i+\sum_{i=1}^K\|\bH_i\|_F^2\nonumber\\
				&\text{subject to }~\bz_i^T\bH_i^T=\lambda\be_i^T\bZ_0\bH_0^T,~ \widehat{\by}=\cA(\bZ_0\bH_0^T).
			\end{align}
			Note the problem 
			\begin{equation}
				\min_{\bH_i\in\bR^{N\times r}}\|\bH_i\|_F^2\quad\text{subject to}\quad\bz_i^T\bH_i^T=\lambda\be_i^T\bZ_0\bH_0^T
			\end{equation}
			has closed form solution
			\begin{align}
				\bH_i^\ast=
				\begin{cases}
					\frac{\lambda\bH_0\bZ_0^T\be_i\bz_i^T}{\|\bz_i\|_2^2} & \bz_i\neq\0\\
					\0 & \bz_i=\0 \text{ and } \be_i^T\bZ_0\bH^T_0 = \0 \\
					\text{infeasible} & \text{otherwise}.
				\end{cases}
			\end{align}
			Then we can replace $\bH_i, i=1, 2,\dots, K$ by its closed form solution in $(P_1^\ast)$ to get (note $w_i=\|\bz_i\|_2^2$)
			\begin{align}
				(P_3):\inf_{\bZ_0\in\bR^{K\times r},\bH_0\in\bR^{N\times r},\bw\in\bR^K_+}\|\bZ_0\|_F^2+\|\bH_0\|_F^2\nonumber\\
				+\sum_{i=1}^K\left(w_i+\lambda^2\frac{\|\be_i^T\bZ_0\bH_0^T\|_2^2}{w_i}\right)
				\text{ ~~~~s.t. }\widehat{\by}=\cA(\bZ_0\bH_0^T).
				\label{eqn:final_problem}
			\end{align}
			Finally, since $\bX=\bZ_0\bH_0^T$, $\mathrm{rank}(\bX)=\mathrm{rank}(\bZ_0\bH_0^T)=\mathrm{rank}(\bZ_0)=\mathrm{rank}(\bH_0)$, and thus we can find $\widehat{\bX}$ as long as $r\geq\mathrm{rank}(\overline{\bX})$.
			
			\qed
			
			As discussed in~\ref{subsec:optimization}, we solve~(\ref{eqn:lagrangian}) using the BFGS method; for higher efficiency, we analytically derive the gradients of $\bZ_0$ and $\bH_0$ as follows
			\begin{align}
				\begin{bmatrix}
					\nabla_\bZ\cL_{\sigma,\cA}\\\nabla_\bH\cL_{\sigma,\cA}
				\end{bmatrix}=2\begin{bmatrix}
				\bZ_0-\cA^\ast(\widehat{\bm{\alpha}})\bH_0+\lambda^2\diag(\bw)^{-1}\bZ_0\bH_0^T\bH_0\\
				\bH_0-\cA^\ast(\widehat{\bm{\alpha}})^T\bZ_0+\lambda^2\bH_0\bZ_0^T\diag(\bw)^{-1}\bZ_0
			\end{bmatrix},
		\end{align}
		where $\widehat{\bm{\alpha}}=\bm{\alpha}-\sigma(\cA(\bZ_0\bH_0^T)-\widehat{\by})$, and $\diag(\bw)$ is the diagonal matrix with its diagonal equal to $\bw$. In this optimization, $\alpha$ is the Lagrange multiplier, $\sigma$ is the penalty parameter where $\sigma_0 = 1e3$, and $\rho = 10$ is the scale factor. Through cross-validation \cite{monga2017} we observed that $r\geq 4$ yields almost identical results regardless of different image sets and initializations; therefore, we choose $r = 4$. Fig.~\ref{fig:psf_refine} presents the estimated kernels per iteration.\vspace{0cm}
		
		\section{Extra Experimental Results}
		As we mentioned in Sec. \ref{sec:Results}, we further increase the kernel length ($l\geq 15$) and included the
		corresponding numerical results only for TPKE, TVBD, and BD-RCS in Table~\ref{my-label1} for images Im1 -- Im4, respectively.
		The visual results for Im2 -- Im4 can be found in Fig.~\ref{fig:r1} -- Fig.~\ref{fig:3}, respectively. For Im4, the original image and its blurred version are included in Fig.~\ref{fig:3a} and
		Fig.~\ref{fig:3b}, respectively. The motion blur kernel is of $l = 20, \theta =
		30\degree$.  Fig.~\ref{fig:3c} includes the result of a well-known
		non-blind image deconvolution algorithm, the BM3D deblurring
		algorithm~\cite{BM3Dblur}, as an approximation to the performance
		upper-bound for any blind image deconvolution algorithms.
		Fig.~\ref{fig:r1a}-~\ref{fig:r1c} are the recovered images from BD-RCS,
		TVBD and TPKE, respectively. To compare the spatial details of the images,
		we magnify a common part of these images and place it in the bottom right
		portion of each image. We also place the reconstructed blur kernel in the
		bottom left portion. Clearly, the TPKE algorithm usually produces ringing artifacts (Gibbs phenomenon)
		in the recovered images particularly around edges as it works in the Fourier domain. As for the kernel, some
		residuals outside the true support can be observed, although the direction
		appears correct.  This contributes to the blurring artifacts in the
		recovered image; in particular, the pattern in the magnified portion is
		distorted. The TVBD algorithm recovers the blur kernel more accurately;
		however, details of the recovered image appear over-smoothed.
		In contrast, the BD-RCS algorithm accurately recovers both the
		blur kernel and the latent image, with visual quality comparable to
		BM3D~\cite{BM3Dblur}. In particular, the details in the magnified part
		appear almost as sharp and clear as the original image. Furthermore, the
		reconstructed blur kernel is also close to the original one. Indeed, their
		supports are exactly the same.\vspace{0mm}

		In this paper, we mostly focused on linear motion blur and to further corroborate the versatility of BD-RCS, we also added deblurring results for an image blurred with a Gaussian blur kernel which is represented in Fig.~\ref{fig:rowGaus} for BD-RCS and selected competing methods that work for general type of blur kernels \cite{Michaeli2014,Krishnan11,Perrone16}.\vspace{-0.4cm}
		\begin{figure}[h!]
			\begin{center}
				\subfloat{\includegraphics[width=20mm]{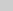}\label{fig:2a}}\,\!
				\subfloat{\includegraphics[width=20mm]{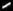}}\,\!
				\subfloat{\includegraphics[width=20mm]{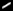}}\,\!
				\subfloat{\includegraphics[width=20mm]{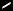}} \\\vspace{-0.1cm}
				\subfloat{\includegraphics[width=20mm]{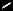}}\,\!
				\subfloat{\includegraphics[width=20mm]{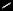}}\,\!
				\subfloat{\includegraphics[width=20mm]{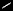}}\,\!
				\subfloat{\includegraphics[width=20mm]{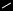}}\,\!\vspace{-0.1cm}
				\caption{Intermediate results of PSF (top left to bottom right).}\vspace{-0cm}
				\label{fig:psf_refine}
			\end{center}
		\end{figure}\vspace{-.5cm}
		\begin{figure}[h!]
			\begin{center}
				\subfloat[Im1]{\includegraphics[width=25mm]{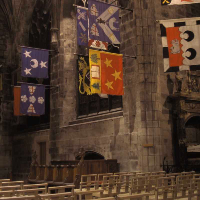} \label{fig:G1}}
				\subfloat[Im2]{\includegraphics[width=25mm]{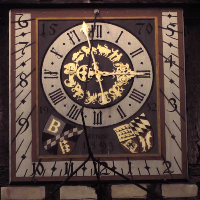} \label{fig:G2}}
				\subfloat[Im3]{\includegraphics[width=25mm]{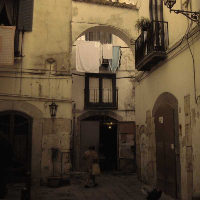} \label{fig:G3}}\\\vspace{-.1cm}
				\subfloat[Im4]{\includegraphics[width=25mm]{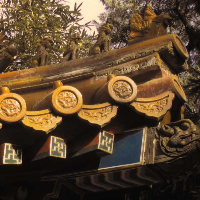} \label{fig:G4}}
				\subfloat[Im5]{\includegraphics[width=25mm]{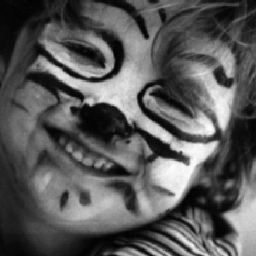} \label{fig:test}}\vspace{-.1cm}
				\caption{Images used in our experiments: Im1 to Im5 from left to right}
				\label{fig:originals}
			\end{center}
		\end{figure}\vspace{-.5cm}

		\begin{figure}[h!]\vspace{-.2cm}
			\begin{center}
				\subfloat[]{\includegraphics[width=28.5mm]{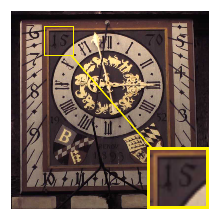} \label{fig:4a}}
				\subfloat[]{\includegraphics[width=28.5mm]{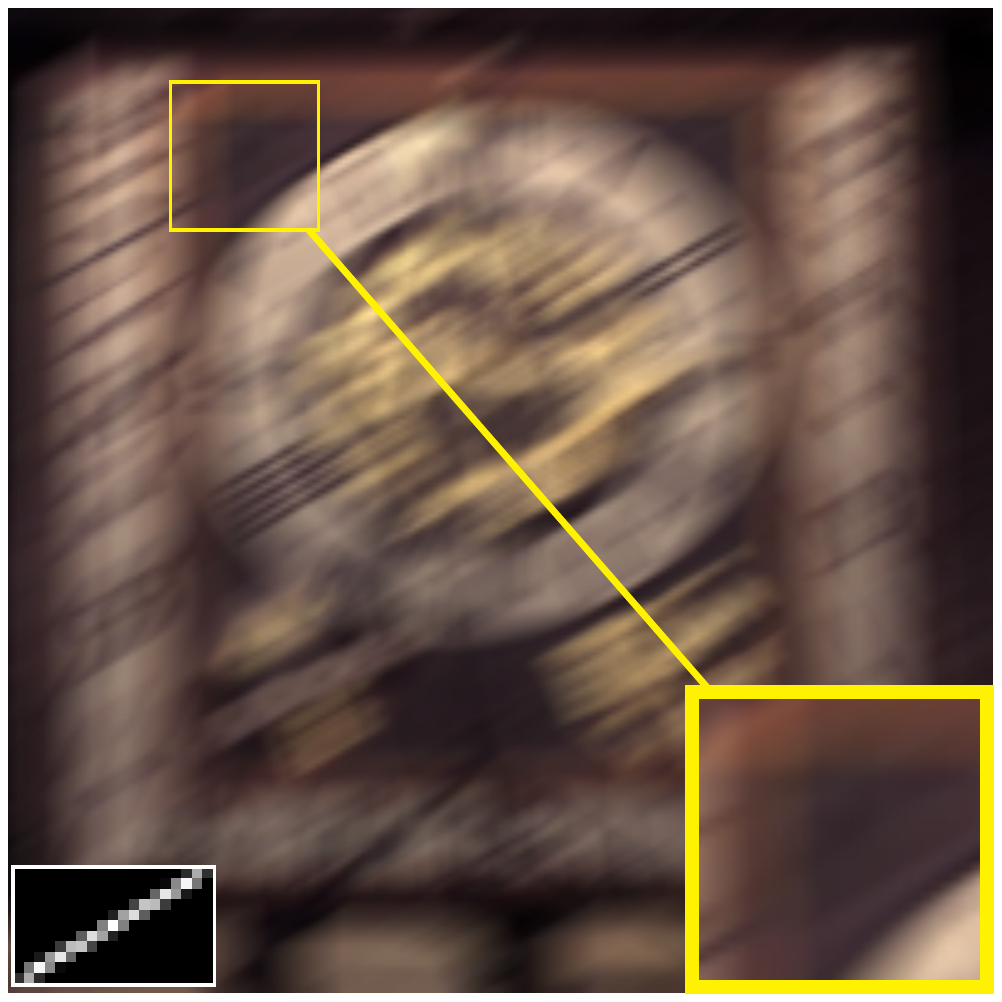} \label{fig:4b}}
				\subfloat[]{\includegraphics[width=28.5mm]{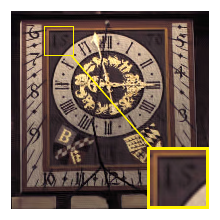} \label{fig:4c}}\\\vspace{-0.2cm}
				\subfloat[]{\includegraphics[width=28.5mm]{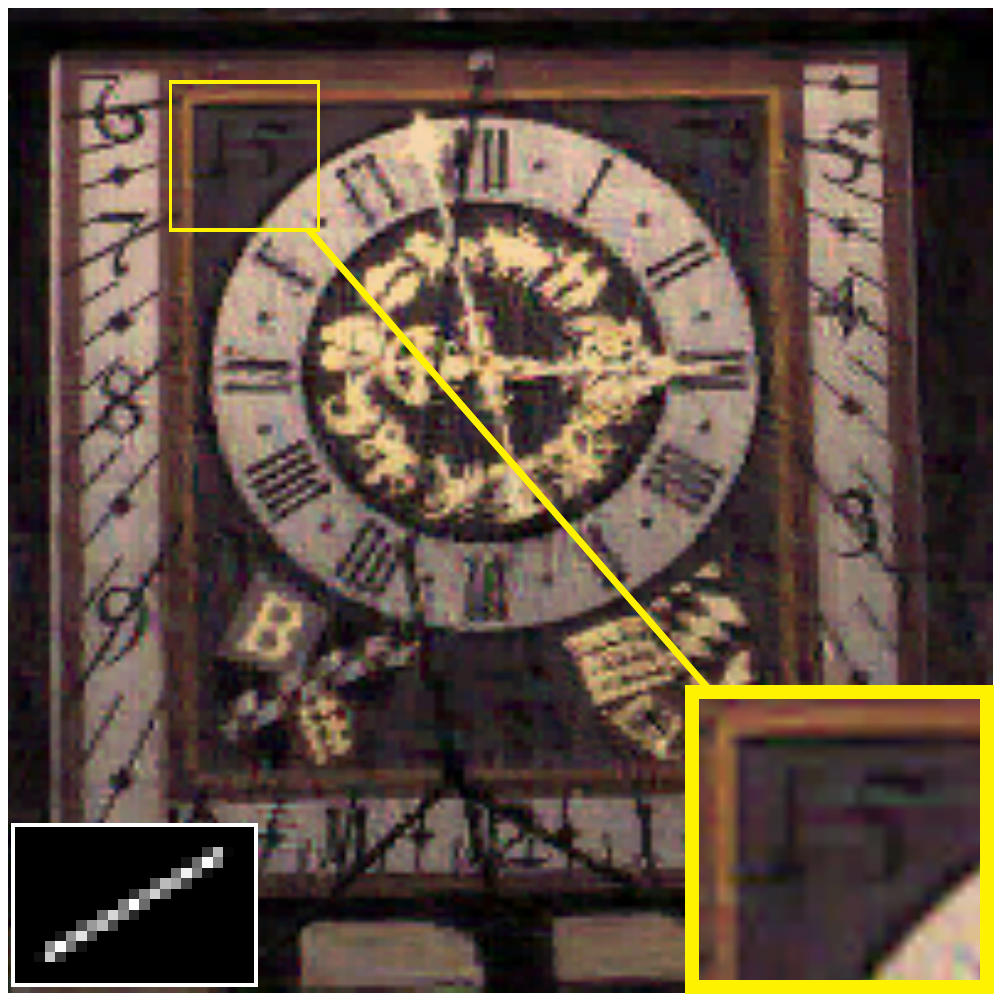} \label{fig:q1a}}
				\subfloat[]{\includegraphics[width=28.5mm]{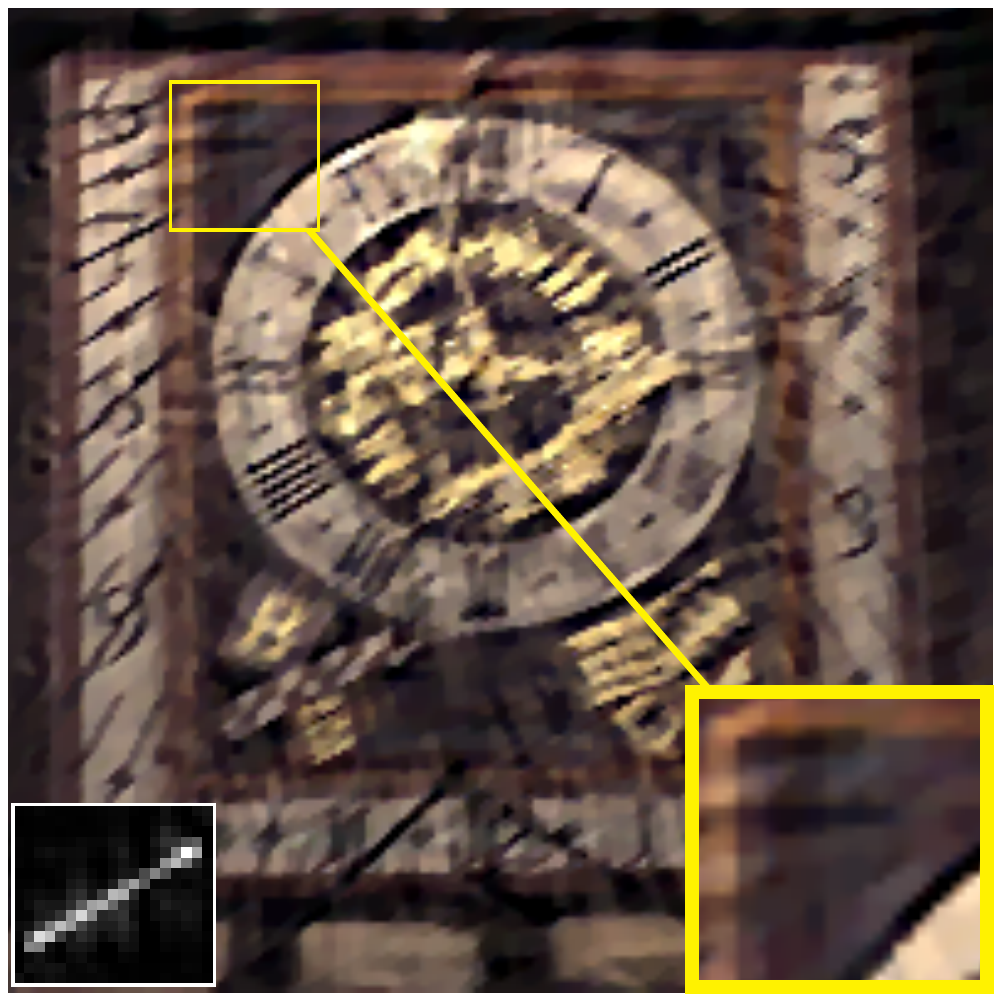} \label{fig:q1b}}
				\subfloat[]{\includegraphics[width=28.5mm]{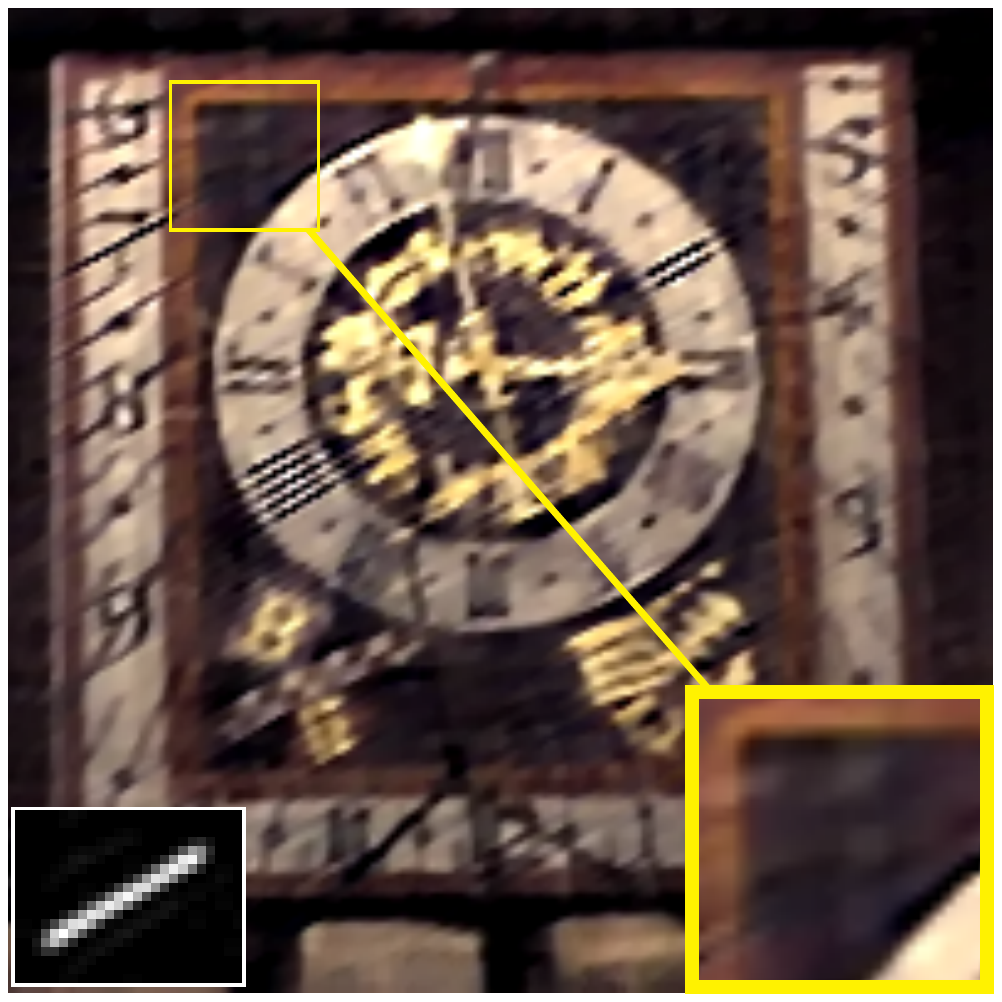} \label{fig:q1c}}\vspace{-0.1cm}
				\caption{a) Original image (Im2), b) Original PSF ($l=20, \theta=30$) and blured image, c) BM3D non-blind deblurring \cite{BM3Dblur}; SNR = 23.30 dB, ISNR = 14.09 dB, d) BD-RCS; SNR = 16.10 dB, ISNR = 6.87 dB, e) TVBD; SNR = 11.88 dB, ISNR = 2.65 dB, f) TPKE; SNR = 12.51 dB, ISNR = 3.29 dB.}\vspace{-0.8cm}
				\label{fig:r1}
			\end{center}
		\end{figure}

		\setcounter{figure}{9}
		\begin{figure*}[!b]
			\begin{center}
				\includegraphics[width=150mm]{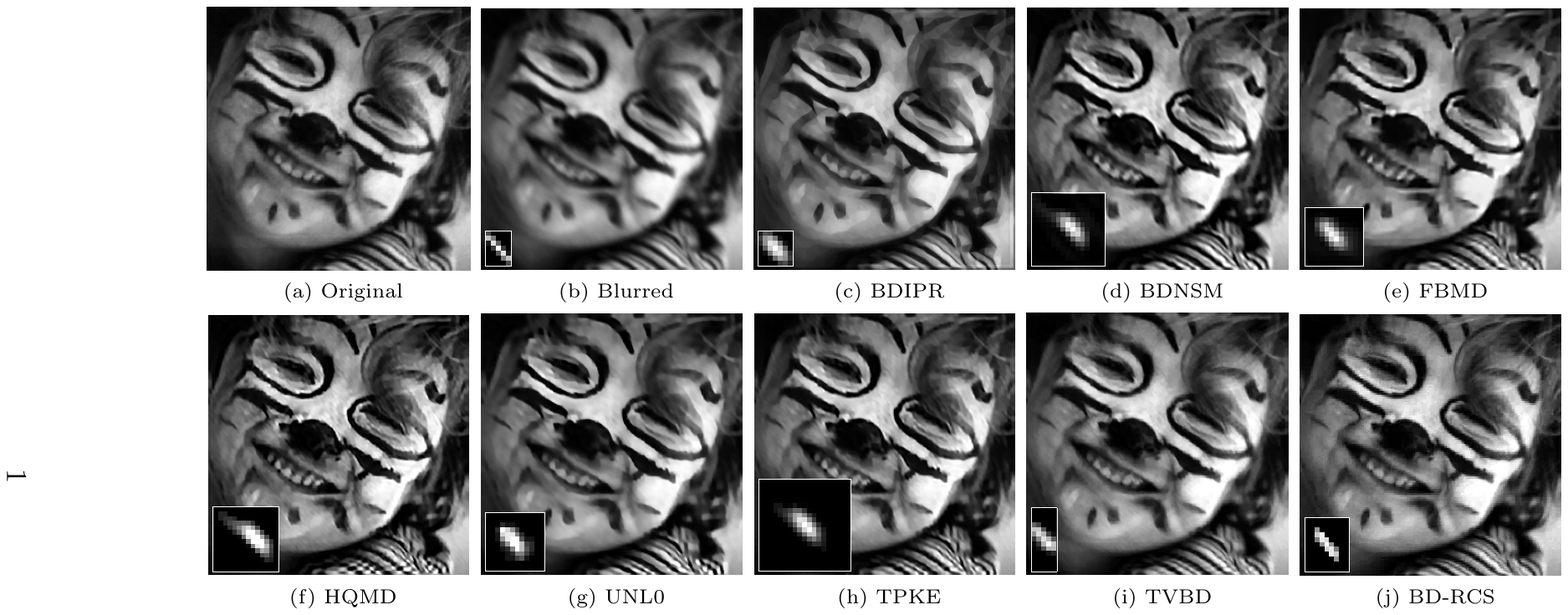}\vspace{-.4cm}
				\caption{Results for small kernel ($l = 7$ and $\theta = 130^\degree$): a) Original image, b) Original PSF ($l=7, \theta=130$) and blured image, c) BDIRP \cite{Michaeli2014}; SNR = 20.88 dB, d) BDNSM \cite{Krishnan11}; SNR=16.29 dB, e) FBMD \cite{Cai12}; SNR=20.65 dB, f) HQMD \cite{Shan08}; SNR=16.72 dB, g) UNL0 \cite{Xu_2013_CVPR}; SNR=17.03, h) TPKE \cite{Xu2010}; SNR = 20.47 dB, i) TVBD \cite{Perrone16}; SNR = 23.79 dB, and j) BD-RCS; SNR = 22.86 dB.} \label{fig:row}
			\end{center}\vspace{-.2cm}
		\end{figure*}
		
		\begin{figure*}[!b]
			\begin{center}
				\includegraphics[width=150mm]{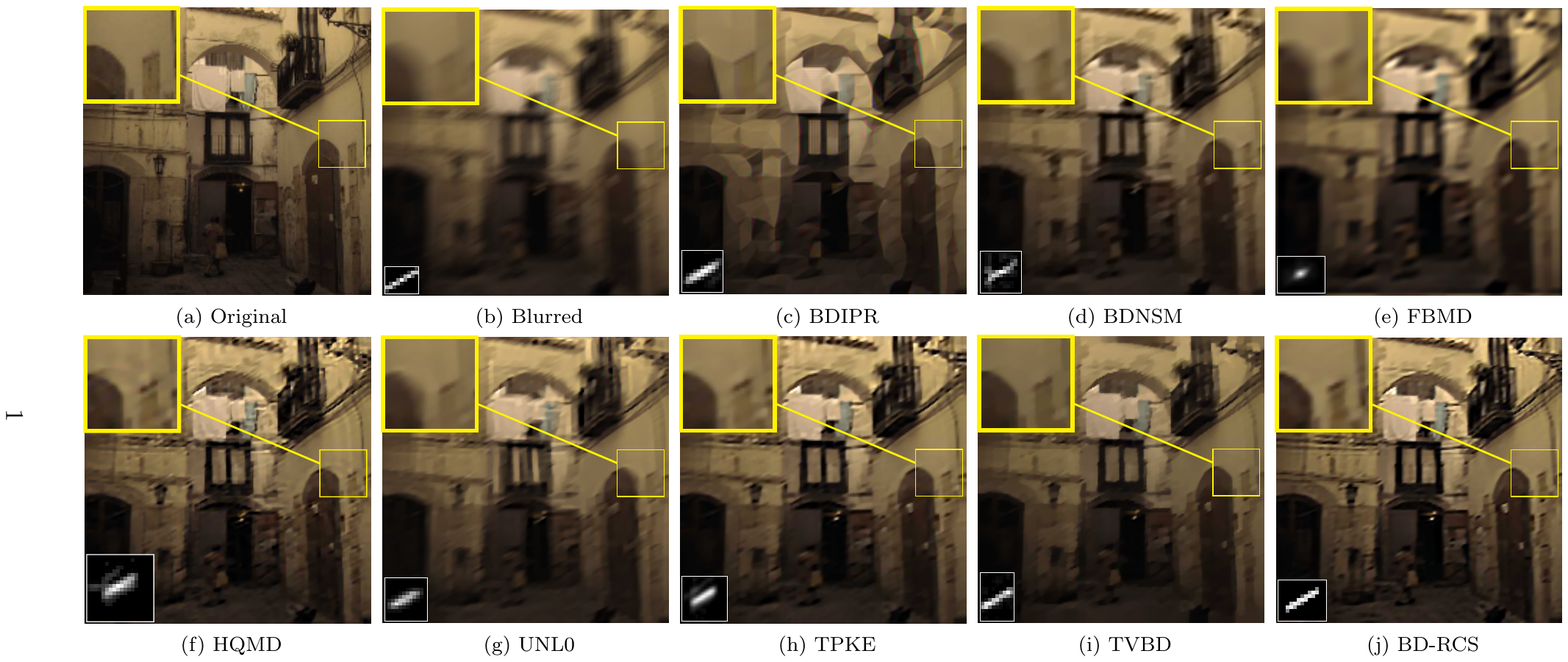}\vspace{-.4cm}
				\caption{a) Original image, b) Original PSF ($l=10, \theta=30$) and blured image, c) BDIRP \cite{Michaeli2014}; SNR = 15.40 dB, d) BDNSM \cite{Krishnan11}; SNR=19.17 dB, e) FBMD \cite{Cai12}; SNR=16.93 dB, f) HQMD \cite{Shan08}; SNR=18.01 dB, g) UNL0 \cite{Xu_2013_CVPR}; SNR=17.60, h) TPKE \cite{Xu2010}; SNR = 19.01 dB, i) TVBD \cite{Perrone16}; SNR = 20.31 dB, and j) BD-RCS; SNR = 22.01 dB.\vspace{-1cm}} \label{fig:row}
			\end{center}\vspace{-0cm}
		\end{figure*}
		{\color{white}{.}}

		\setcounter{figure}{7}
		\begin{figure}[h]
			\begin{center}
				\subfloat[]{\includegraphics[width=28.5mm]{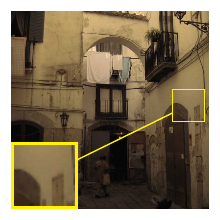} \label{fig:6a}}
				\subfloat[]{\includegraphics[width=28.5mm]{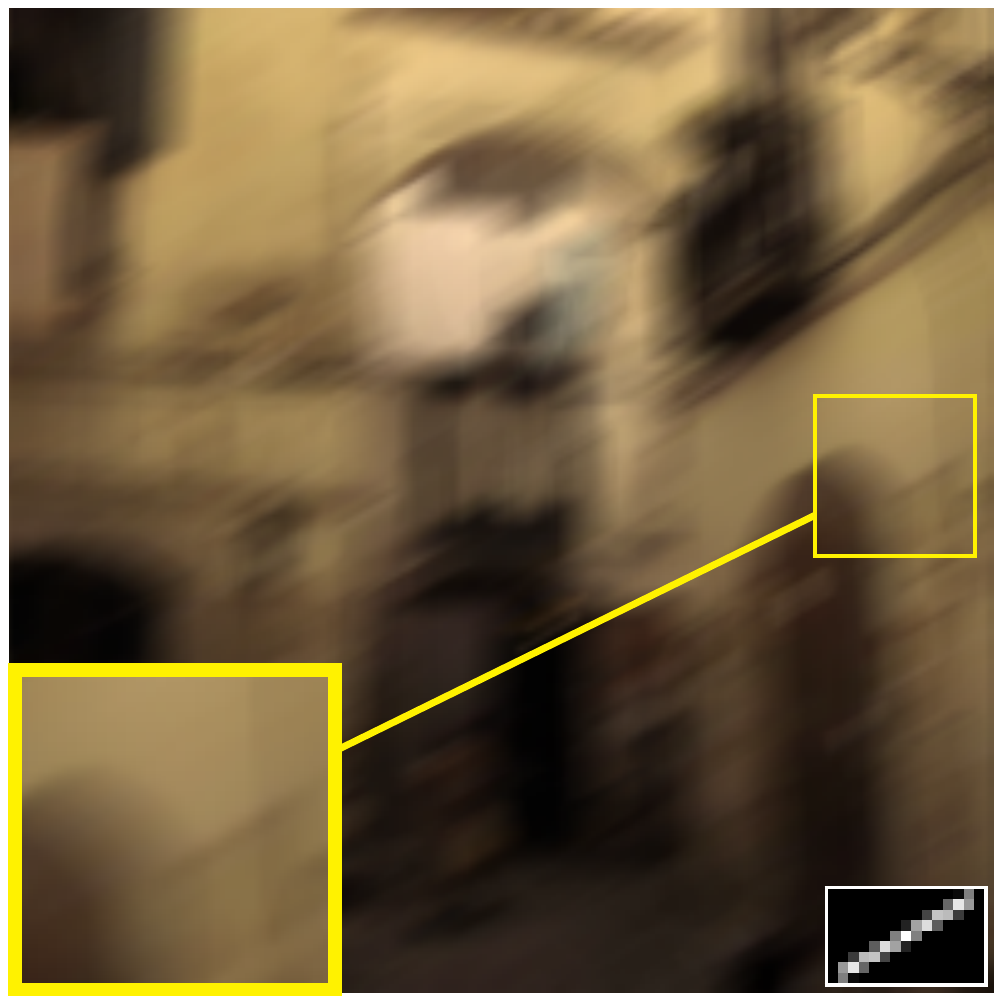} \label{fig:6b}}
				\subfloat[]{\includegraphics[width=28.5mm]{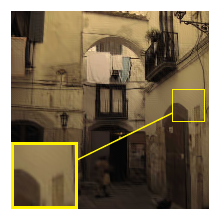} \label{fig:6c}}\\\vspace{-0.3cm}
				\subfloat[]{\includegraphics[width=28.5mm]{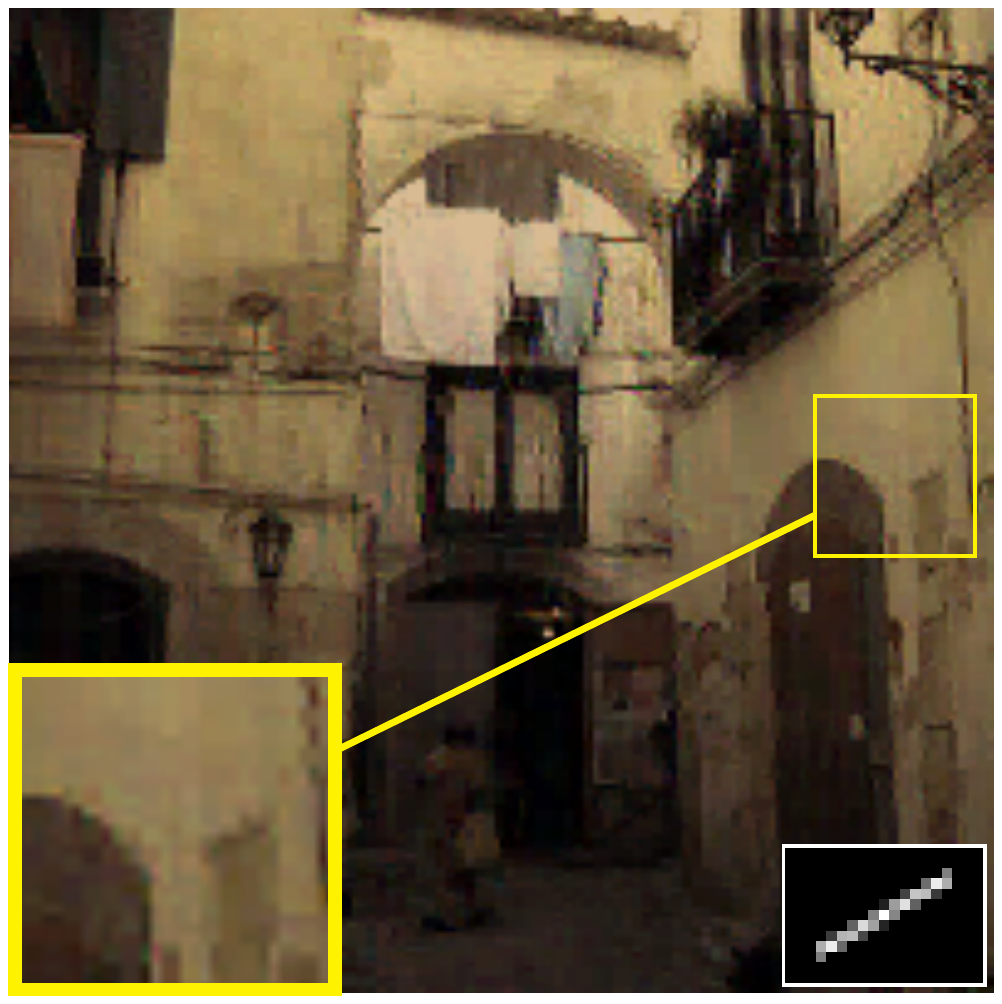} \label{fig:r3a}}
				\subfloat[]{\includegraphics[width=28.5mm]{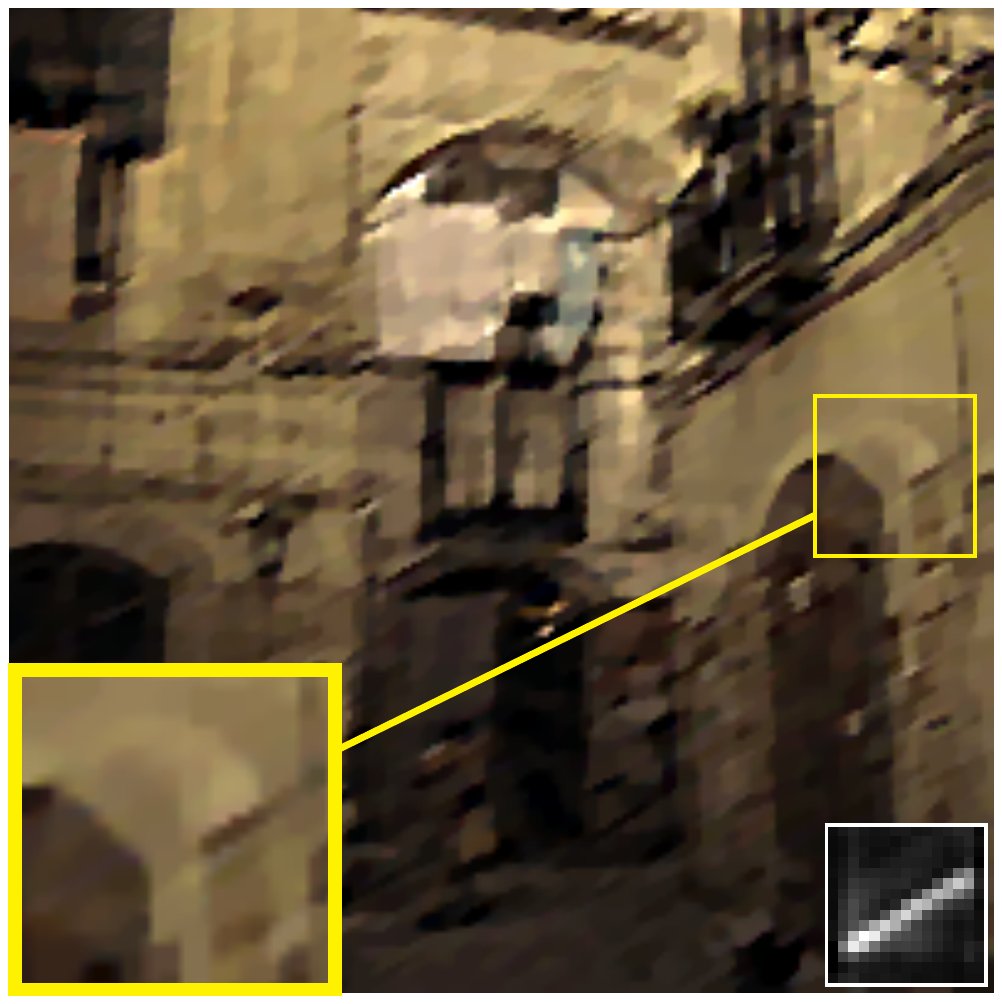} \label{fig:r3c}}
				\subfloat[]{\includegraphics[width=28.5mm]{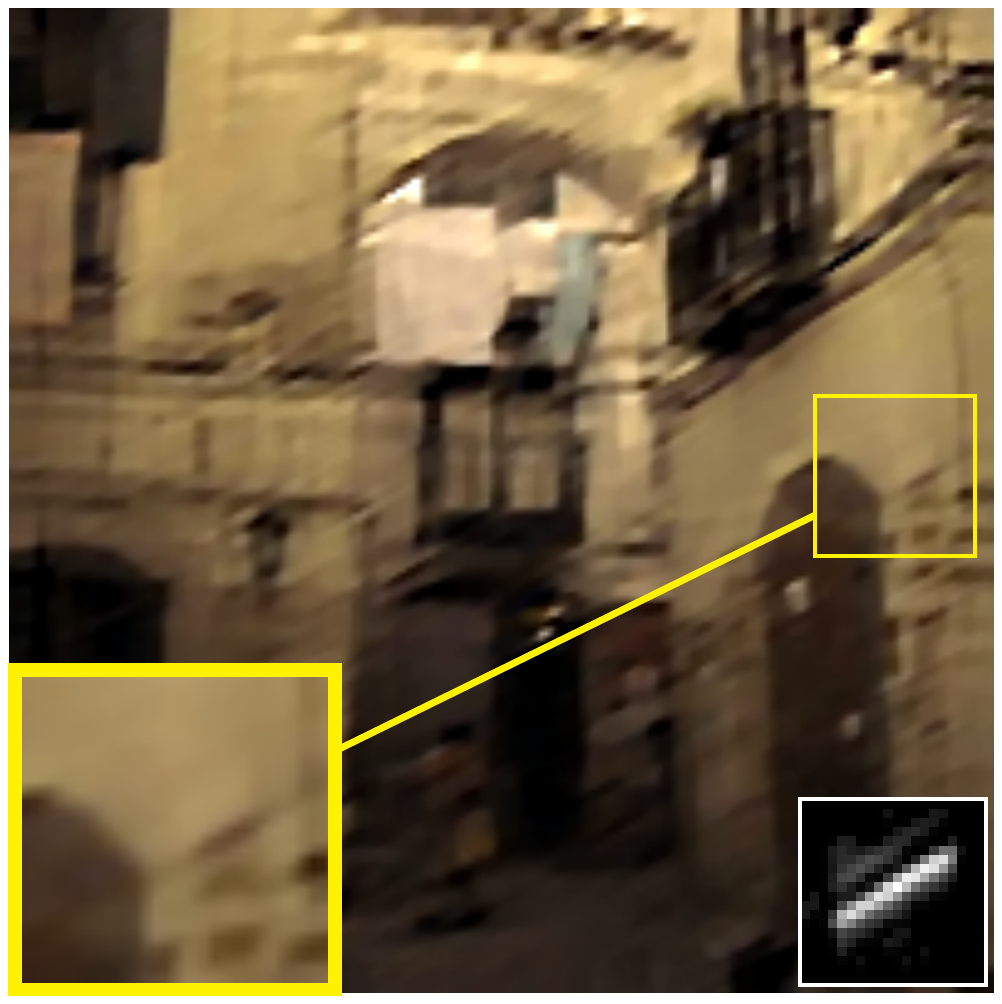} \label{fig:r3e}}\vspace{-0.1cm}
				\caption{a) Original image (Im3), b) Original PSF ($l=15, \theta=30$) and blured image, c) BM3D non-blind deblurring \cite{BM3Dblur}; SNR = 27.48 dB, ISNR = 13.54 dB, d) BD-RCS; SNR = 21.06 dB, ISNR = 7.14 dB, e) TVBD; SNR = 14.06 dB, ISNR = 0.14 dB, f) TPKE; SNR = 16.44 dB, ISNR = 2.52 dB.}\vspace{-8mm}
				\label{fig:r3}
			\end{center}\vspace{-0.5cm}
		\end{figure}
		
		\begin{figure}[h]
			\begin{center}
				\subfloat[]{\includegraphics[width=28.5mm]{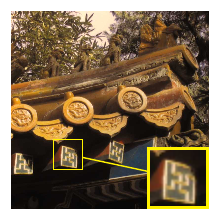} \label{fig:3a}}
				\subfloat[]{\includegraphics[width=28.5mm]{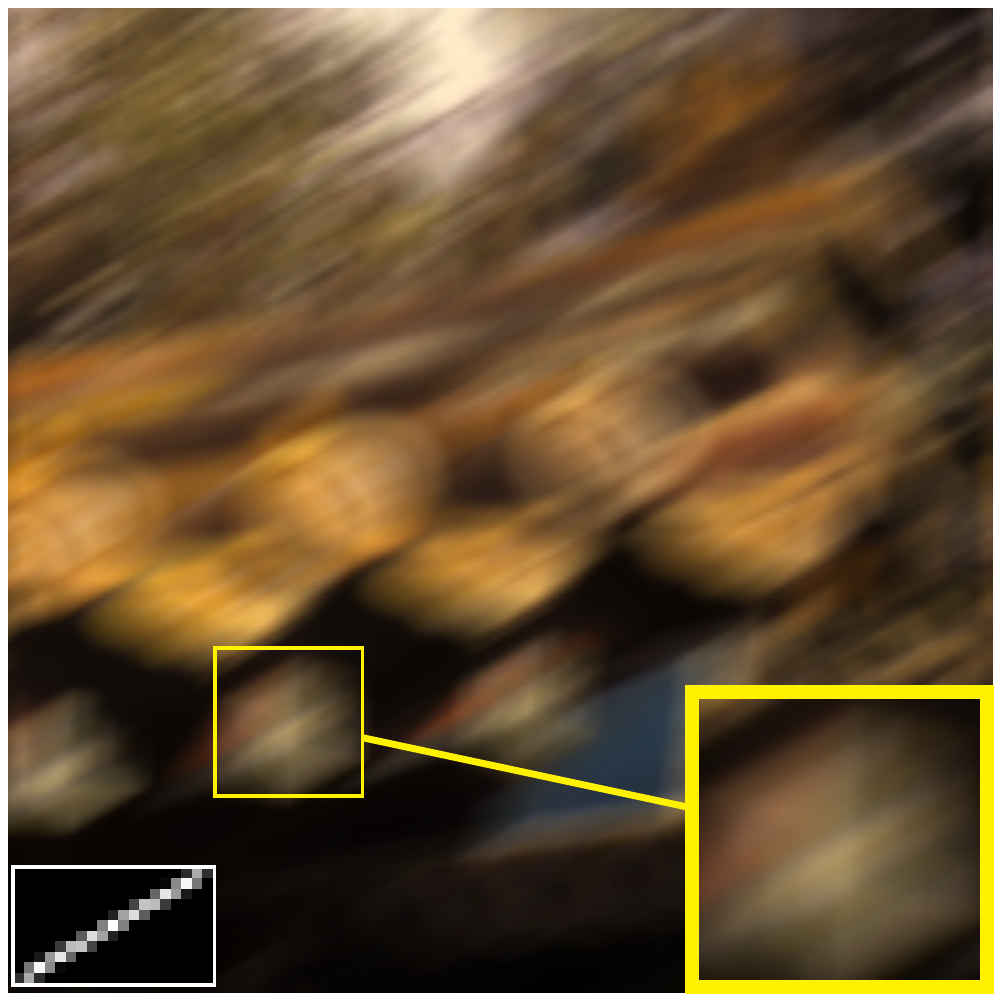} \label{fig:3b}}
				\subfloat[]{\includegraphics[width=28.5mm]{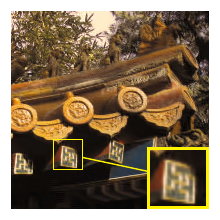} \label{fig:3c}}\\\vspace{-3mm}
				\subfloat[]{\includegraphics[width=28.5mm]{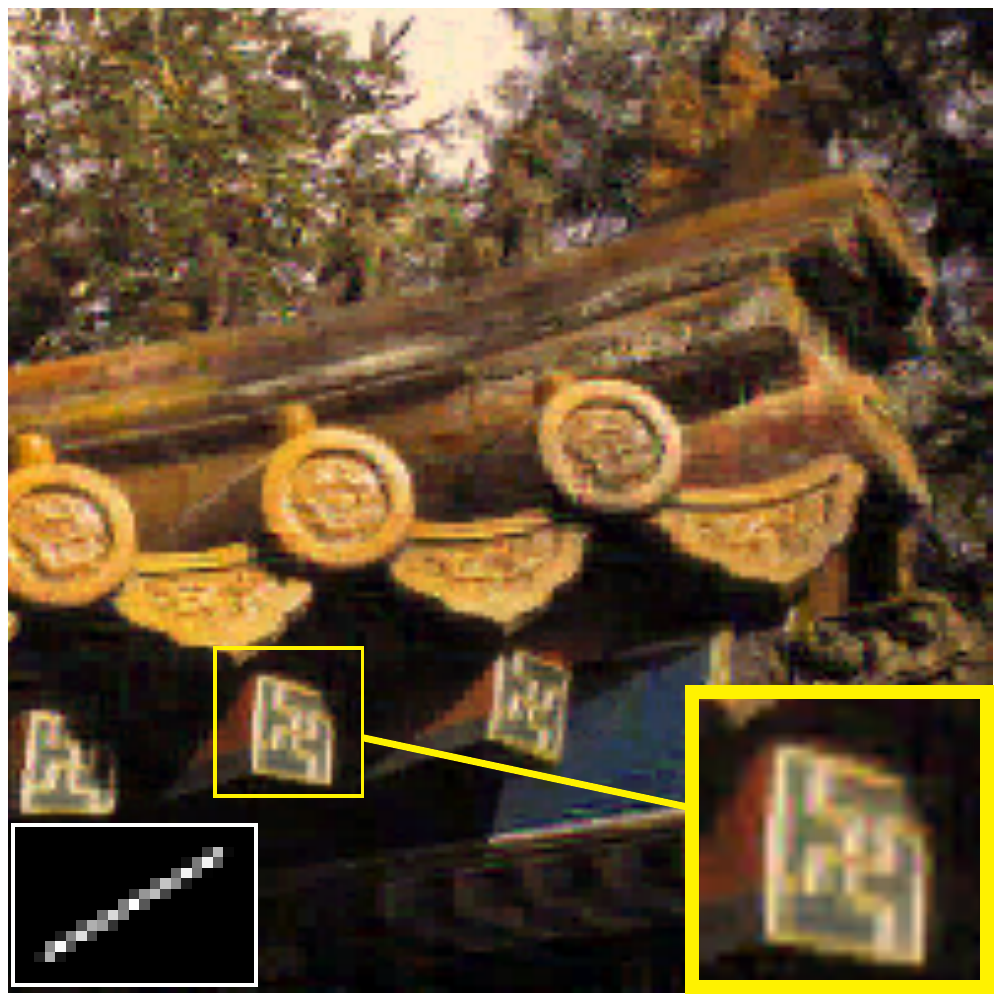} \label{fig:r1a}}
				\subfloat[]{\includegraphics[width=28.5mm]{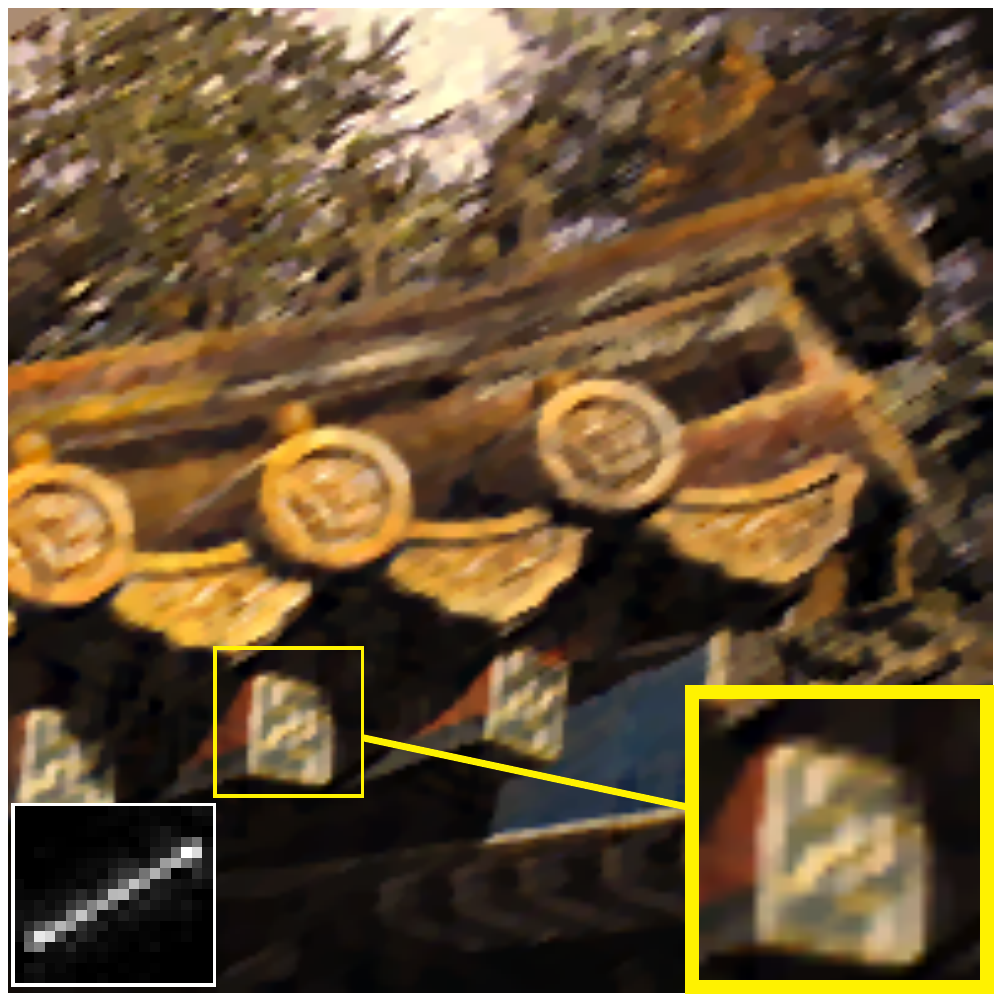} \label{fig:r1b}}
				\subfloat[]{\includegraphics[width=28.5mm]{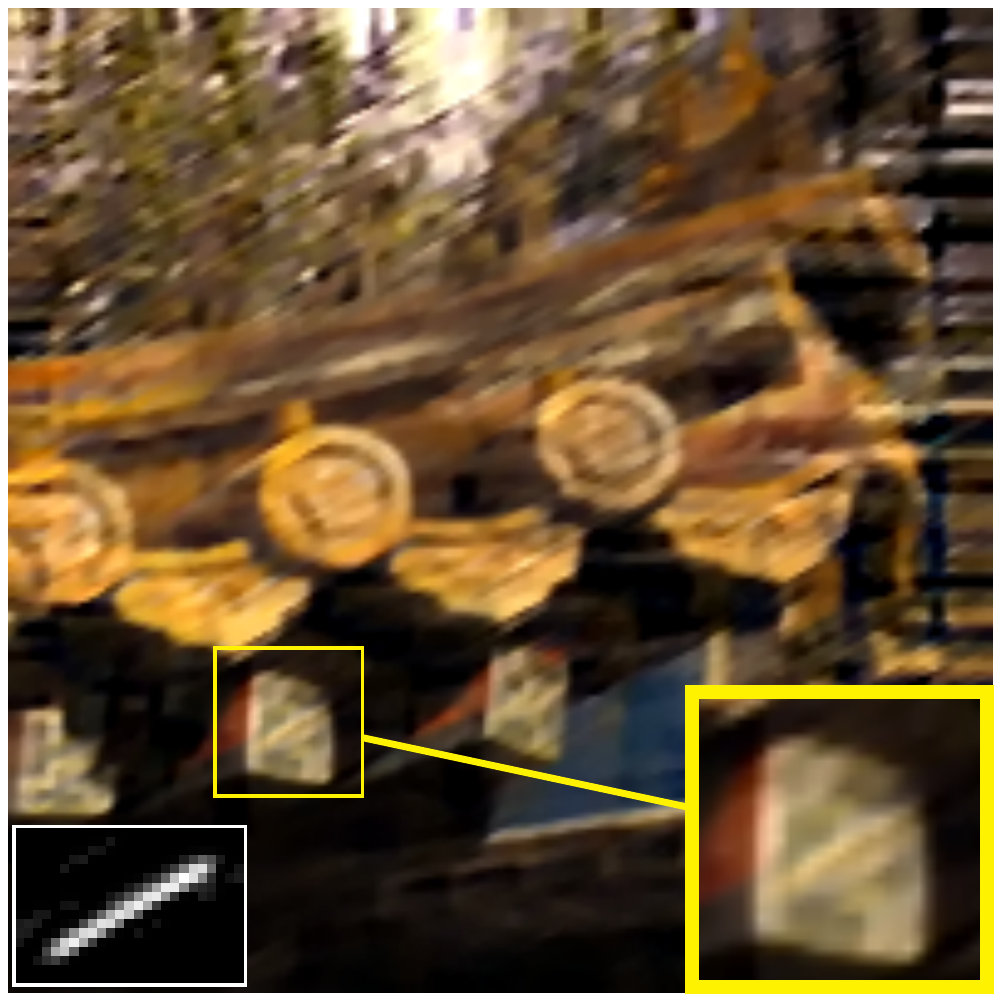} \label{fig:r1c}}\vspace{-1.5mm}
				\caption{a) Original image (Im4), b) Original PSF ($l=20, \theta=30\degree$) and blurred image, c) BM3D non-blind deblurring \cite{BM3Dblur}; SNR = 23.36 dB, ISNR = 12.84 dB, d) BD-RCS; SNR = 16.95 dB, ISNR = 6.40 dB, e) TVBD; SNR = 15.34 dB, ISNR = 4.79, f) TPKE; SNR = 12.19 dB, ISNR = 1.64 dB.\vspace{-7mm}}
				\label{fig:3}
			\end{center}\vspace{-0.5cm}
		\end{figure}
		
		\clearpage 
		\onecolumn
		\section{Convergence Analysis of BD-RCS}
		In the following analysis, we assume that $r$ is sufficiently large so that all the factorizations over $\bX = \bZ\bH^T$ are well defined; a conservative bound (depending only on $K,N,L$) for $r$ may be obtained using the methods discussed in~\cite{burer_local_2005}, etc. We also assume $\mathcal{A}$ is an operator satisfying the following property: $\|\mathcal{A}(\bX)\|_2^2 \geq \delta\|\bX\|_F^2$ for some $\delta>0,\forall~\bX\in\bR^{K\times N} \text{and rank}(\bX) \leq 2r$. This property is a weaker form of the restricted isometry property (RIP) discussed in~\cite{recht_guaranteed_2010}, and has been employed in many previous works such as~\cite{jain_low-rank_2013}. We further assume that L-BFGS succeeds in finding a local minima $(\bZ^{k+1},\bH^{k+1})$ to $\cL_{\sigma_k,\cA}(\bZ,\bH,\bw_k;\balpha_k),~\forall~k$.
		
		We will prove the convergence of Algorithm 1 that solves (5); proof corresponding to the algorithm solving (6) can be derived analogously. For ease of analysis (and to compensate for the scaling between (5) and (7) in the paper), we let $f^*$ be \emph{twice of} the optimal cost of (5) in paper and let $\bX^\ast=\bZs\bHs^T$ be its minimizer. We further define 
		
		\begin{align}
		f(\bZ,\bH,\bw):=\|\bZ\|_F^2+\|\bH\|_F^2+\sum_{i=1}^K\left(\bw_i+\lambda^2\frac{\|\be_i^T\bZ\bH^T\|_2^2}{\bw_i}\right). 
		\end{align}
		We also define $\bws$ via $\bw^\ast_i=\lambda\|\bZs\bHs^T\|_2$.
		
		\begin{proposition}
			Every local minima $(\bar{\bZ},\bar{\bH})$ of $\cL_{\sigma,\cA}(\bZ,\bH,\bw;\balpha)$ globally minimizes the Lagrangian $\cL_{0,\cA}(\bZ,\bH,\bw;\widehat{\balpha})$, where $\widehat{\balpha} = \balpha - \sigma(\cA(\bZ\bH^T) - \yhat)$.
			\label{prop1}
		\end{proposition}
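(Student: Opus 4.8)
\emph{Proof proposal.} The plan is to exploit two structural features of the problem. First, with $\bw\in\bR^K_+$, $\balpha$, and $\sigma$ held fixed, the map $(\bZ,\bH)\mapsto\cL_{\sigma,\cA}(\bZ,\bH,\bw;\balpha)$ is a rank-$r$ Burer--Monteiro parametrization of a \emph{convex} semidefinite program, so that, $r$ being chosen large enough, the given \emph{local} minimizer $(\bar\bZ,\bar\bH)$ is automatically a \emph{global} minimizer. Second, minimizing over all factorizations with a prescribed product $\bX=\bZ\bH^T$ collapses the problem onto the convex variable $\bX$, and there the augmented-Lagrangian penalty, under subdifferentiation, becomes precisely the multiplier shift $\balpha\mapsto\widehat{\balpha}=\balpha-\sigma(\cA(\bar\bZ\bar\bH^T)-\yhat)$. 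Write $\bar\bX:=\bar\bZ\bar\bH^T$ and recall $\cL_{\sigma,\cA}(\bZ,\bH,\bw;\balpha)=f(\bZ,\bH,\bw)-2\balpha^H(\cA(\bZ\bH^T)-\yhat)+\sigma\|\cA(\bZ\bH^T)-\yhat\|_2^2$.

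\emph{Step 1 (local $\Rightarrow$ global for $\cL_{\sigma,\cA}$).} Let $\bU\in\bR^{(K+N)\times r}$ be $\bZ$ stacked over $\bH$ and $\overline{\bX}:=\bU\bU^T\succeq\mathbf{0}$. Then $\|\bZ\|_F^2+\|\bH\|_F^2=\tr(\overline{\bX})$, while $\bZ\bH^T$ is a linear block of $\overline{\bX}$; since $w_i>0$, the term $\sum_i\lambda^2\|\be_i^T\bZ\bH^T\|_2^2/w_i$ is a convex quadratic of that block. Hence $\cL_{\sigma,\cA}(\bZ,\bH,\bw;\balpha)=G_\sigma(\overline{\bX})$ for a convex functional $G_\sigma$ on the positive semidefinite cone, and minimizing $\cL_{\sigma,\cA}$ over $(\bZ,\bH)\in\bR^{K\times r}\times\bR^{N\times r}$ is the factored form of $\min_{\overline{\bX}\succeq\mathbf{0}}G_\sigma(\overline{\bX})$. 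Because $r=4$ exceeds the uniform rank bound of~\cite{burer_local_2005}, the local minimizer $\bar\bU$ is rank deficient, and the local-equals-global property of such factored programs~\cite{burer_nonlinear_2003,burer_local_2005} (the mechanism behind~\cite{recht_guaranteed_2010}) gives that $\bar\bU\bar\bU^T$ globally minimizes $G_\sigma$; equivalently, $(\bar\bZ,\bar\bH)$ globally minimizes $\cL_{\sigma,\cA}(\cdot,\cdot,\bw;\balpha)$.

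\emph{Step 2 (descend to $\bX$ and shift the multiplier).} Because $\inf\{\|\bZ\|_F^2+\|\bH\|_F^2:\bZ\bH^T=\bX\}=2\|\bX\|_*$ (valid as $r\ge\mathrm{rank}(\bX)$), minimizing out the factors yields $\inf_{\bZ\bH^T=\bX}\cL_{\sigma,\cA}(\bZ,\bH,\bw;\balpha)=\Psi_\sigma(\bX)$ with $\Psi_\sigma(\bX):=2\|\bX\|_*+\sum_iw_i+\sum_i\lambda^2\|\be_i^T\bX\|_2^2/w_i-2\balpha^H(\cA(\bX)-\yhat)+\sigma\|\cA(\bX)-\yhat\|_2^2$, a convex function of $\bX$. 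Global optimality from Step~1 forces $\bar\bX$ to minimize $\Psi_\sigma$ \emph{and} the factorization to be norm-optimal, $\|\bar\bZ\|_F^2+\|\bar\bH\|_F^2=2\|\bar\bX\|_*$. In $0\in\partial\Psi_\sigma(\bar\bX)$ the penalty contributes $2\sigma\cA^*(\cA(\bar\bX)-\yhat)$, which merges with $-2\cA^*(\balpha)$ into $-2\cA^*(\widehat{\balpha})$; dividing by $2$ gives $0\in\partial\Phi(\bar\bX)$ where $\Phi(\bX):=\|\bX\|_*+\tfrac12\sum_i\lambda^2\|\be_i^T\bX\|_2^2/w_i-\widehat{\balpha}^H(\cA(\bX)-\yhat)$ is convex, so $\bar\bX$ minimizes $\Phi$. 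Finally, for every $(\bZ,\bH)$ the same inf-over-factors identity gives $\cL_{0,\cA}(\bZ,\bH,\bw;\widehat{\balpha})\ge2\Phi(\bZ\bH^T)+\sum_iw_i\ge2\Phi(\bar\bX)+\sum_iw_i$, while at $(\bar\bZ,\bar\bH)$ norm-optimality gives $\cL_{0,\cA}(\bar\bZ,\bar\bH,\bw;\widehat{\balpha})=2\Phi(\bar\bX)+\sum_iw_i$; therefore $(\bar\bZ,\bar\bH)$ globally minimizes $\cL_{0,\cA}(\cdot,\cdot,\bw;\widehat{\balpha})$, which is the assertion.

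The main obstacle is the use of the Burer--Monteiro machinery in Step~1: one must verify that the chosen $r$ indeed forces $\bar\bU$ to be rank deficient, and that the technical hypotheses of the local-equals-global statements in~\cite{burer_nonlinear_2003,burer_local_2005,recht_guaranteed_2010} — attainment and coercivity of $G_\sigma$ on the semidefinite cone, boundedness of the pertinent sublevel sets — hold in our unconstrained, penalized instance. All remaining steps are routine convex analysis: the Frobenius-to-nuclear identity $\inf\{\|\bZ\|_F^2+\|\bH\|_F^2:\bZ\bH^T=\bX\}=2\|\bX\|_*$, completing the square to produce $\widehat{\balpha}$, and the elementary subdifferential calculus of $\|\cdot\|_*$.
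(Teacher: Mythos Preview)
Your proposal is correct and follows essentially the same route as the paper: invoke the Burer--Monteiro local-equals-global result to pass from the factored augmented Lagrangian to the convex problem in $\bX$, absorb the quadratic penalty into the multiplier to obtain $\widehat{\balpha}$, and then lift back using the variational identity $\inf\{\|\bZ\|_F^2+\|\bH\|_F^2:\bZ\bH^T=\bX\}=2\|\bX\|_\ast$. The only cosmetic difference is that the paper establishes the norm-optimality $\|\bar\bZ\|_F^2+\|\bar\bH\|_F^2=2\|\bar\bZ\bar\bH^T\|_\ast$ explicitly from the first-order conditions (showing $\bar\bZ^T\bar\bZ=\bar\bH^T\bar\bH$ via the gradient equations and an SVD), whereas you obtain it as an immediate consequence of global optimality in Step~1 combined with the variational identity; both arguments are valid and the overall structure is the same.
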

		
		\begin{proof}
			Following the same reasoning as in Proposition 2.3 in~\cite{burer_local_2005}, $(\bar{\bZ},\bar{\bH})$ minimizes the following problem through the mapping $\bX = \bZ\bH^T$, $\bV = \bZ\bZ^T$, and $\bW = \bH\bH^T$:
			\begin{align}
				&\min_{\bX, \bW, \bV}\tr(\bV) + \tr(\bW) + \lambda^2\sum_{i=1}^{K}\frac{\|\be_i^T\bX\|_2^2}{\bw_{i}} + 2\langle \balpha, \yhat - \cA(\bX)\rangle + \sigma\|\cA(\bX)-\yhat\|^2 \nonumber\\
				&\text{subject to}\quad\begin{bmatrix}
					\bV & \bX\\
					\bX^T & \bW
				\end{bmatrix}\succeq\mathbf{0},
			\end{align}
			
			and in turn minimizes the following dual problem by the mapping $\bX = \bZ\bH^T$:
			\begin{align}
				&\min_{\bX} 2\|\bX\|_{*} + \lambda^2\sum_{i=1}^{K}\frac{\|\be_i^T\bX\|_2^2}{\bw_{i}} + 2\langle \balpha, \yhat - \cA(\bX)\rangle + \sigma\|\cA(\bX)-\yhat\|^2.
			\end{align}
			Therefore, $0\in\partial\|\bar{\bZ}\bar{\bH}^T\|_{*} + \lambda^{2}\mathrm{diag}(\bw)^{-1}\bar{\bZ}\bar{\bH}^T - \cA^{*}(\widehat{\balpha})$ where $\partial$ denotes the subdifferential and $\cA^\ast$ is the adjoint operator to $\cA$. This implies:
			\begin{align}
				&\|\bar{\bZ}\bar{\bH}^T\|_{*} + \frac{\lambda^2}{2}\sum_{i=1}^{K}\frac{\|\be_i^T\bar{\bZ}\bar{\bH}^T\|_2^2}{\bw_{i}} - \langle \cA^{*}(\widehat{\balpha}), \bar{\bZ}\bar{\bH}^T\rangle\nonumber\\
				&\leq\|\bX\|_{*} + \frac{\lambda^{2}}{2}\sum_{i=1}^{K}\frac{\|\be_i^T\bX\|_2^2}{\bw_{i}} - \langle\cA^{*}(\widehat{\balpha}), \bX\rangle \quad \forall~\bX\in\bR^{K\times N}.
			\end{align}
			In particular, letting $\bX = \bZ\bH^T$ gives
			\begin{align}
				&\|\bar{\bZ}\bar{\bH}^T\|_{*} + \frac{\lambda^2}{2}\sum_{i=1}^{K}\frac{\|\be_i^T\bar{\bZ}\bar{\bH}^T\|_2^2}{\bw_{i}} - \langle \cA^{*}(\widehat{\balpha}), \bar{\bZ}\bar{\bH}^T\rangle\nonumber\\
				&\leq\|\bZ\bH^T\|_{*} + \frac{\lambda^{2}}{2}\sum_{i=1}^{K}\frac{\|\be_i^T\bZ\bH^T\|_2^2}{\bw_{i}} - \langle\cA^{*}(\widehat{\balpha}), \bZ\bH^T\rangle \forall~\bH\in\bR^{N\times r}, \bZ\in\bR^{K\times r}. \label{star}
			\end{align}
			On the other hand, since $\nabla_\bZ\cL_{\sigma,\cA}(\bar{\bZ},\bar{\bH},\bw;\bar{\balpha})=0$, $\nabla_\bH\cL_{\sigma,\cA}(\bar{\bZ},\bar{\bH},\bw;\bar{\balpha})=0$:
			\begin{align}
				\bar{\bZ}-\cA^\ast(\widehat{\balpha})\bar{\bH}+\lambda^2\diag(\bw)^{-1}\bar{\bZ}\bar{\bH}^T\bar{\bH}&=0,\label{1}\\
				\bar{\bH}^T-\bar{\bZ}^T\cA^\ast(\widehat{\balpha})+\lambda^2\bar{\bZ}^T\diag(\bw)^{-1}\bar{\bZ}\bar{\bH}^T&=0.\label{2}
			\end{align}
			Left multiplying (\ref{1}) by $\bar{\bZ}^T$, right multiplying (\ref{2}) by $\bar{\bH}$ and subtracting the two gives $\bar{\bZ}^T\bar{\bZ} = \bar{\bH}^T\bar{\bH}$; thus $\bar{\bZ}$, $\bar{\bH}$ admits the following singular value decomposition:
			\begin{align}
				&\bar{\bZ} = \bU_1\bS\bV^T, \bar{\bH} = \bU_2\bS\bV^T, \bU_1\in\bR^{K\times r}, \nonumber\\
				&\bU_2\in\bR^{N\times r}, \bS\in\bR^{r\times r}, \bV\in\bR^{r\times r},
			\end{align}
			and therefore
			\begin{align}
				&\|\bar{\bZ}\bar{\bH}^T\|_{*} = \|\bU_1\bS^2\bU^T\|_{*} = \sum_{i=1}^{r}\bS^2_i = \frac{1}{2}\left(\|\bZ\|_F^2 + \|\bH\|_F^2\right) \quad \mathrm{where}~\bS = \begin{bmatrix}
					\bS_1 & & & \\
					& \bS_2 & & \bigzero\\
					\bigzero&  &\ddots & &\\
					& & & \bS_r
				\end{bmatrix}.\label{3}
			\end{align}
			Following the same procedures on $\nabla_\bZ\cL_{0,\cA}(\bZs,\bHs,\bws;\balphas)=0$ and $\nabla_\bH\cL_{0,\cA}(\bZs,\bHs,\bws;\balphas)=0$, we can obtain
			\begin{equation}
				\|\bZs\bHs^T\|_\ast=\frac{1}{2}\left(\|\bZs\|_F^2+\|\bHs\|_F^2\right).
				\label{eqn:start_eq}
			\end{equation}
			On the other hand, using the Arithmetic-Geometric Mean Inequality~\cite{bhatia_matrix_1997}:
			\begin{align}
				\sum_{i}s_i(\bZ\bH^T)&\leq\frac{1}{2}\sum_{i}\left[s_i(\bZ^T\bZ)^2 + s_i(\bH^T\bH)^2\right]\nonumber\\
				&= \frac{1}{2}\left[\sum_{i}s_i(\bZ^T\bZ)^2\right] + \left[\sum_{i}s_i(\bH^T\bH)^2\right]\nonumber\\
				&= \frac{1}{2}\left(\|\bZ\|_F^2 + \|\bH\|_F^2\right)\label{4},
			\end{align}
			where $s_i(\bX)$ is the $i$-th singular value of $\bX$. Plugging (\ref{3}) and (\ref{4}) into (\ref{star}) gives $\cL_{0,\cA}(\bar{\bZ},\bar{\bH},\bw;\widehat{\balpha}) \leq \cL_{0,\cA}(\bZ,\bH,\bw;\widehat{\balpha})$.
		\end{proof}
		
		\begin{proposition}
			$\|\bw^k - \bw^*\|_2^2 \leq 2\left(\lambda^2\|\bZ^k\bH^{kT} - \bZ^*\bH^{*T}\|_F^2+\frac{K\varepsilon_0^2}{(k+2)^4}\right)$.
			\label{prop2}
		\end{proposition}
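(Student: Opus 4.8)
The plan is to reduce the vector inequality to a row‑by‑row scalar estimate using only the explicit update rule for $\bw^k$ together with two elementary inequalities. By the weight update in Algorithm~1, under the indexing used in this section the $i$‑th entry of the current weight vector is
\begin{equation}
\bw^k_i = \lambda\norm{\be_i^T\bZ^k\bH^{kT}}_2 + \frac{\varepsilon_0}{(k+2)^2},
\end{equation}
while by definition $\bw^*_i = \lambda\norm{\be_i^T\bZ^*\bH^{*T}}_2$. Subtracting these and applying the elementary bound $(a+b)^2 \leq 2a^2 + 2b^2$ with $a = \lambda\bigl(\norm{\be_i^T\bZ^k\bH^{kT}}_2 - \norm{\be_i^T\bZ^*\bH^{*T}}_2\bigr)$ and $b = \varepsilon_0/(k+2)^2$ gives
\begin{equation}
(\bw^k_i - \bw^*_i)^2 \leq 2\lambda^2\left(\norm{\be_i^T\bZ^k\bH^{kT}}_2 - \norm{\be_i^T\bZ^*\bH^{*T}}_2\right)^2 + \frac{2\varepsilon_0^2}{(k+2)^4}.
\end{equation}

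Next I would bound the first term on the right using the reverse triangle inequality $\abs{\norm{\bu}_2 - \norm{\bv}_2} \leq \norm{\bu - \bv}_2$, applied with $\bu$ and $\bv$ the $i$‑th rows of $\bZ^k\bH^{kT}$ and $\bZ^*\bH^{*T}$ respectively, so that the squared difference is dominated by $\norm{\be_i^T(\bZ^k\bH^{kT} - \bZ^*\bH^{*T})}_2^2$. Summing the resulting inequality over $i = 1,\dots,K$ and invoking the identity $\sum_{i=1}^K \norm{\be_i^T\bX}_2^2 = \norm{\bX}_F^2$ (the Frobenius norm of a matrix equals the sum of the squared norms of its rows) with $\bX = \bZ^k\bH^{kT} - \bZ^*\bH^{*T}$ yields
\begin{equation}
\norm{\bw^k - \bw^*}_2^2 \leq 2\lambda^2\norm{\bZ^k\bH^{kT} - \bZ^*\bH^{*T}}_F^2 + \frac{2K\varepsilon_0^2}{(k+2)^4},
\end{equation}
which is exactly the claimed bound after pulling out the common factor $2$.

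There is no real obstacle here: the statement is essentially a one‑line consequence of the form of the weight update, and the inequalities used are standard. The only points that require a little care are bookkeeping ones — confirming that the perturbation recorded at step $k$ is precisely $\varepsilon_0/(k+2)^2$ under the indexing convention of this section, and recalling that $\bw^*$ is defined entrywise through the row‑wise quantities $\norm{\be_i^T\bZ^*\bH^{*T}}_2$ (the per‑row minimizers of the terms $\bw_i + \lambda^2\norm{\be_i^T\bZ\bH^T}_2^2/\bw_i$ appearing in the objective $f$), not through a single global norm, so that the per‑$i$ estimates correctly reassemble into a Frobenius norm on the right‑hand side.
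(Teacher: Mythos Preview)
Your proposal is correct and follows essentially the same structure as the paper's proof: obtain a per-row bound on $|\bw_i^k-\bw_i^*|$, apply $(a+b)^2\le 2a^2+2b^2$, and sum over $i$ to assemble the Frobenius norm. The only cosmetic difference is that where you invoke the reverse triangle inequality $\bigl|\|\bu\|_2-\|\bv\|_2\bigr|\le\|\bu-\bv\|_2$ directly, the paper reaches the same row-wise estimate via the mean-value theorem for $t\mapsto\|\be_i^T(\cdot)\|_2$ followed by Cauchy--Schwarz; your route is slightly more elementary and avoids having to worry about differentiability at zero rows.
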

		\begin{proof}
			Using the mean-value theorem
			\begin{align}
				|\bw_i^k - \bw_i^*| &=\left|\lambda\|\be_i^T\bZ^k\bH^{kT}\|_2 +\frac{\varepsilon_0}{(k+2)^2}- \lambda\|\be_i^T\bZ^*\bH^{*T}\|_2\right|\nonumber\\
				&= \left|\lambda\frac{\be_i^T\bZ_\theta\bH_\theta^{T}}{\|\be_i^T\bZ_\theta\bH_\theta^{T}\|_2}\left(\bZ^k\bH^{kT} - \bZ^*\bH^{*T}\right)\be_i+\frac{\varepsilon_0}{(k+2)^2}\right|\nonumber\\
				&\leq \lambda \left\|\left(\bZ^k\bH^{kT} - \bZ^*\bH^{*T}\right)\be_i\right\|_2+\frac{\varepsilon_0}{(k+2)^2} ~~ (\text{Cauchy-Schwarz}),\label{eqn:wi}
			\end{align}
			where $\bZ_\theta = \theta\bZ^* + (1-\theta)\bZ^k$, $\bH_\theta = \theta\bH^* + (1-\theta)\bH^k$, $\theta\in(0,1)$. Therefore 
			\begin{align*}
				\|\bw^k - \bw^*\|_2^2 &= \sum_{i=1}^{K}|\bw_i^k - \bw_i^*|^2\\
				&\leq \sum_{i=1}^{K}2\left(\lambda^2 \|\left(\bZ^k\bH^{kT} - \bZ^*\bH^{*T}\right)\be_i\|^2_2+\frac{\varepsilon_0^2}{(k+2)^4}\right)\\
				&=2\left(\lambda^2\|\bZ^k{\bH^k}^T-\bZs\bHs^T\|_F^2+\frac{\varepsilon_0^2}{(k+2)^4}\right).
			\end{align*}
		\end{proof}
		
		\begin{proposition}
			$f^* \leq \cL_{0,\cA}(\bZ,\bH,\bw;\balphas), \forall~\bZ\in\bR^{K\times r}, \bH\in\bR^{N\times r},\mathrm{and}~\bw\in\bR^{K}$.
			\label{prop3}
		\end{proposition}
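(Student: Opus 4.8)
The plan is to collapse the inequality, which ranges over the factored triple $(\bZ,\bH,\bw)$, down to a statement about the single matrix $\bX=\bZ\bH^T$, and then to recognize that statement as strong duality for the convex problem~(\ref{kernl}) at the multiplier $\balphas$. First I would bound the groups of terms in $\cL_{0,\cA}(\bZ,\bH,\bw;\balphas)$ separately. The Frobenius part is controlled by the singular-value AM--GM bound $\|\bZ\|_F^2+\|\bH\|_F^2\geq 2\|\bZ\bH^T\|_\ast$ — exactly the inequality~(\ref{4}) established in the proof of Proposition~\ref{prop1}. The weighted part is controlled by the elementary scalar bound $w_i+\lambda^2 t^2/w_i\geq 2\lambda t$ for $w_i>0,\ t\geq0$ (the degenerate cases $w_i\leq 0$ make the Lagrangian $+\infty$ or give a vacuous $0\geq0$), applied to $t=\|\be_i^T\bX\|_2$ and summed over $i$, yielding $\sum_{i=1}^K\!\big(w_i+\lambda^2\|\be_i^T\bX\|_2^2/w_i\big)\geq 2\lambda\|\bX\|_{2,1}$. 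Substituting both into the definition of $\cL_{0,\cA}$, for \emph{every} $(\bZ,\bH,\bw)$ — with no feasibility of $\bX$ assumed — one obtains
\[
\cL_{0,\cA}(\bZ,\bH,\bw;\balphas)\ \geq\ 2\Big(\|\bX\|_\ast+\lambda\|\bX\|_{2,1}-\balphas^H\!\big(\cA(\bX)-\yhat\big)\Big)\ =:\ 2\,L_0(\bX;\balphas),
\]
where $L_0(\cdot\,;\balphas)$ is precisely the un-augmented ($\sigma=0$) Lagrangian of the convex problem~(\ref{kernl}).

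It remains to show $\inf_{\bX}L_0(\bX;\balphas)=p^\ast$, where $p^\ast$ is the optimal value of~(\ref{kernl}); by construction $f^\ast=2p^\ast$. Since~(\ref{kernl}) is convex with only affine equality constraints and is feasible (the $KN>2L$ regime noted in the footnote), strong duality holds and the dual optimum is attained by the multiplier $\balphas$ in the statement: the factored stationarity relations $\nabla_\bZ\cL_{0,\cA}(\bZs,\bHs,\bws;\balphas)=\nabla_\bH\cL_{0,\cA}(\bZs,\bHs,\bws;\balphas)=0$, together with primal feasibility $\yhat=\cA(\bZs\bHs^T)$ and $\bws_i=\lambda\|\be_i^T\bX^\ast\|_2$, are equivalent — via the SVD-balancing computation in the proof of Proposition~\ref{prop1} — to the matrix optimality condition $\cA^\ast(\balphas)\in\partial\big(\|\bX^\ast\|_\ast+\lambda\|\bX^\ast\|_{2,1}\big)$, i.e.\ $\bX^\ast$ minimizes $L_0(\cdot\,;\balphas)$. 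Hence $\inf_\bX L_0(\bX;\balphas)=L_0(\bX^\ast;\balphas)=\|\bX^\ast\|_\ast+\lambda\|\bX^\ast\|_{2,1}=p^\ast$, the multiplier term vanishing by feasibility. Combining with the display above, for all $\bZ,\bH,\bw$,
\[
\cL_{0,\cA}(\bZ,\bH,\bw;\balphas)\ \geq\ 2\,L_0(\bZ\bH^T;\balphas)\ \geq\ 2\inf_{\bX}L_0(\bX;\balphas)\ =\ 2p^\ast\ =\ f^\ast,
\]
which is the claim.

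The one genuinely delicate step is this identification — that the specific $\balphas$ in the statement attains the dual optimum, so that $\inf_\bX L_0(\bX;\balphas)$ equals $p^\ast$ rather than merely being $\leq p^\ast$ (weak duality always gives the latter). It rests on (a) strong duality for~(\ref{kernl}), automatic because the constraints are affine and the problem feasible, and (b) the passage from the factored KKT conditions in $(\bZ,\bH)$ to the matrix subgradient condition in $\bX$, which is the same SVD-balancing argument already carried out for Proposition~\ref{prop1} and can simply be quoted. A more self-contained alternative avoids duality altogether: by Proposition~\ref{prop1} with $\sigma=0$ and $\balpha=\balphas$, every local minimizer of $\cL_{0,\cA}(\cdot\,;\balphas)$ is a global minimizer; the stationary point $(\bZs,\bHs,\bws)$ is one such, and evaluating there — using $\yhat=\cA(\bZs\bHs^T)$ and $\bws_i=\lambda\|\be_i^T\bX^\ast\|_2$, so each bracket equals $2\lambda\|\be_i^T\bX^\ast\|_2$ — gives $\cL_{0,\cA}(\bZs,\bHs,\bws;\balphas)=\|\bZs\|_F^2+\|\bHs\|_F^2+2\lambda\|\bX^\ast\|_{2,1}=f(\bZs,\bHs,\bws)=f^\ast$, so $\inf_{\bZ,\bH,\bw}\cL_{0,\cA}(\bZ,\bH,\bw;\balphas)=f^\ast$ directly.
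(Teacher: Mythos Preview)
Your proposal is correct and follows essentially the same route as the paper: both arguments use the factorization inequality $\|\bZ\|_F^2+\|\bH\|_F^2\geq 2\|\bZ\bH^T\|_\ast$ and the scalar AM--GM bound $w_i+\lambda^2t^2/w_i\geq 2\lambda t$ to reduce $\cL_{0,\cA}(\bZ,\bH,\bw;\balphas)$ to twice the Lagrangian of~(\ref{kernl}) at $\balphas$, then invoke that $\bX^\ast$ minimizes that Lagrangian. The paper simply asserts the last step (``since $\bX^\ast$ minimizes~(\ref{kernl}), it also minimizes its Lagrangian''), whereas you spell out that this is strong duality for an affinely constrained convex program with $\balphas$ the associated KKT multiplier --- a welcome clarification, but not a different argument.
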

		\begin{proof}
			Since $\bX^* = \bZ^*\bH^{*T}$ minimizes the convex program (5) in paper, it also minimizes its Lagrangian, and we thus have
			\begin{align}
				f^* &= \|\bZ^*\bH^{*T}\|_* + \lambda\|\bZ^*\bH^{*T}\|_{2,1} \leq \|\bX\|_* + \lambda\|\bX\|_{2,1} + \langle \balphas, \yhat - \cA(\bX)\rangle,\forall\bX\in\bR^{K\times N}.
			\end{align}
			By taking $\bX = \bZ\bH^{T}$, the RHS becomes 
			\begin{align}
				&\|\bZ\bH^T\|_{*} + \lambda\|\bZ^*\bH^{*T}\|_{2,1} + \langle \balphas, \yhat - \cA(\bZ\bH^T)\rangle\nonumber\\
				&\leq \frac{1}{2}\left(\|\bZ\|_F^2 + \|\bH\|_F^2\right) + \frac{1}{2}\sum_{i=1}^{K}\left(\bw_i+ \lambda^2\frac{\|\be_i^T\bZ\bH^T\|_2^2}{\bw_{i}}\right) + \langle \balphas, \yhat - \cA(\bZ\bH^T)\rangle \nonumber\\
				&= \frac{1}{2}\cL_{0,\cA}(\bZ,\bH,\bw;\balphas),
			\end{align} 
			where we used the inequality $\|\bZ\bH^T\|_*\leq\frac{1}{2}\left(\|\bZ\|_F^2 + \|\bH\|_F^2\right)$ (as in the proof of the Proposition 1) and $\bw_i + \lambda^2\frac{\|\be_i^T\bZ\bH^T\|_2^2}{\bw_{i}}\geq2\sqrt{\lambda^2\bw_i\frac{\|\be_i^T\bZ\bH^T\|_2^2}{\bw_{i}}} = 2\lambda\|\be_i^T\bZ\bH^T\|_2$.
		\end{proof}
		
		\begin{theorem}
		The sequence $(\bZ^k, \bH^k)$ generated by Algorithm 1 converges to the optimal solution of (5) in the paper in the sense of $\lim\limits_{k}\bZ^k\bH^{kT} = \bZ^*\bH^{*T} = \bX^*$.
		\end{theorem}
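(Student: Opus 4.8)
\noindent\emph{Proof plan.} I would reduce the claim to a statement about the feasibility residual and then control that residual through the outer multiplier recursion. Set $\bX^k:=\bZ^k\bH^{kT}$ and $g_k:=\cA(\bX^{k+1})-\yhat$, so that the multiplier update of Algorithm~\ref{alg:almK} reads $\balpha^{k+1}=\balpha^k-\sigma_k g_k$. Both $\bX^{k+1}$ and $\bX^\ast$ pass through an $r$-column factor, hence have rank at most $r$, so $\bX^{k+1}-\bX^\ast$ has rank at most $2r$; combining $\cA(\bX^\ast)=\yhat$ with the RIP-like assumption yields
\begin{equation*}
\delta\,\|\bX^{k+1}-\bX^\ast\|_F^2\ \le\ \|\cA(\bX^{k+1}-\bX^\ast)\|_2^2\ =\ \|g_k\|_2^2 .
\end{equation*}
Hence it suffices to prove $g_k\to\0$ (the same inequality shows $\bX^\ast$ is the \emph{unique} rank-$\le r$ feasible point, so no separate uniqueness argument is needed).

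\noindent\emph{Bounding the residual.} By Proposition~\ref{prop1}, the point returned by L-BFGS is not just a stationary point: $\bX^{k+1}$ \emph{globally} minimizes the convex function
\begin{equation*}
h_k(\bX):=2\|\bX\|_\ast+\lambda^2\sum_{i=1}^{K}\frac{\|\be_i^T\bX\|_2^2}{\bw_i^k}+2\langle\balpha^k,\yhat-\cA(\bX)\rangle+\sigma_k\|\cA(\bX)-\yhat\|_2^2 .
\end{equation*}
Writing out $h_k(\bX^{k+1})\le h_k(\bX^\ast)$, dropping the nonnegative $\|\bX^{k+1}\|_\ast$ and $\|\be_i^T\bX^{k+1}\|_2^2$ terms, using $\cA(\bX^\ast)=\yhat$, and applying Young's inequality to $2\langle\balpha^k,g_k\rangle$ gives
\begin{equation*}
\tfrac12\sigma_k\|g_k\|_2^2\ \le\ 2\|\bX^\ast\|_\ast+\lambda^2\|\bX^\ast\|_F^2\,\max_i\tfrac1{\bw_i^k}+\tfrac2{\sigma_k}\|\balpha^k\|_2^2 .
\end{equation*}
Since the weight update keeps $\bw_i^k\ge\varepsilon_0/(k+1)^2$ and $\sigma_k=\sigma_0\rho^k$ with $\rho>1$, this reads $\|g_k\|_2^2\le\bigl(C_1+C_2(k+1)^2\bigr)\rho^{-k}+C_3\|\balpha^k\|_2^2\rho^{-2k}$ for constants depending only on $\lambda,\varepsilon_0,\sigma_0,\bX^\ast$. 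The theorem therefore reduces to showing $\{\balpha^k\}$ bounded (indeed $\|\balpha^k\|_2=o(\rho^k)$ already suffices).

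\noindent\emph{Boundedness of the multipliers.} I would establish this by a quasi-Fej\'er argument relative to the multiplier $\balphas$ associated with $\bX^\ast$. From $g_k=\cA(\bX^{k+1}-\bX^\ast)$, the first-order condition $\cA^\ast(\balpha^{k+1})\in\partial\|\bX^{k+1}\|_\ast+\lambda^2\diag(\bw^k)^{-1}\bX^{k+1}$ (again Proposition~\ref{prop1}), the Lagrangian optimality $\cA^\ast(\balphas)\in\partial\bigl(\|\cdot\|_\ast+\lambda\|\cdot\|_{2,1}\bigr)(\bX^\ast)$ furnished by Proposition~\ref{prop3}, the two subgradient inequalities, the arithmetic--geometric gap between $\lambda^2\|\be_i^T\bX\|_2^2/\bw_i^k$ and $2\lambda\|\be_i^T\bX\|_2$, and Proposition~\ref{prop2} to control $\|\bw^k-\bw^\ast\|_2$, one derives $\langle\balpha^k-\balphas,g_k\rangle\ge\sigma_k\|g_k\|_2^2-\eta_k$ with a nonnegative error $\eta_k$. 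Plugging this into $\|\balpha^{k+1}-\balphas\|_2^2=\|\balpha^k-\balphas\|_2^2-2\sigma_k\langle\balpha^k-\balphas,g_k\rangle+\sigma_k^2\|g_k\|_2^2$ gives
\begin{equation*}
\|\balpha^{k+1}-\balphas\|_2^2\ \le\ \|\balpha^k-\balphas\|_2^2-\sigma_k^2\|g_k\|_2^2+2\sigma_k\eta_k ,
\end{equation*}
so $\{\balpha^k\}$ is bounded once $\sum_k\sigma_k\eta_k<\infty$; then the residual bound forces $g_k\to\0$ and the first step gives $\bX^k\to\bX^\ast$.

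\noindent\emph{Main obstacle.} The hard part is closing this last loop, which is genuinely circular: by Proposition~\ref{prop2} the error $\eta_k$ is controlled by $\|\bX^k-\bX^\ast\|_F$ (plus an $\varepsilon_0^2/(k+2)^4$ term), and $\|\bX^k-\bX^\ast\|_F$ is exactly what we are trying to send to $0$. I expect to have to run a joint induction: assuming $\|\balpha^j-\balphas\|_2\le M$ for $j\le k$, the residual bound yields the geometric decay $\|g_j\|_2^2\lesssim(j+1)^2\rho^{-j}$, which via the first step bounds $\|\bX^j-\bX^\ast\|_F$, hence $\sigma_j\eta_j$, hence $\sum_{j\le k}\sigma_j\eta_j$ below a fixed budget, which re-establishes $\|\balpha^{k+1}-\balphas\|_2\le M$; making this work forces the geometric growth of $\sigma_k$ to be dominated by the $\rho^{-k}$ and $(k+2)^{-4}$ decays. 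The one feature beyond the classical augmented-Lagrangian analysis of~\cite{recht_guaranteed_2010,burer_local_2005} is that $\lambda^2\sum_i\|\be_i^T\bX\|_2^2/\bw_i^k$ is an IRLS-type surrogate for $2\lambda\|\bX\|_{2,1}$ whose weights lag one iteration and whose $i$-th weight collapses to $0$ for rows outside the kernel support; bounding the contribution of those collapsing rows --- which is precisely what the $\varepsilon_0/(k+2)^2$ safeguard in the weight update is designed for --- is the technical crux.
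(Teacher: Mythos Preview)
Your Fej\'er-type inequality
\[
\|\balpha^{k+1}-\balphas\|_2^2\ \le\ \|\balpha^k-\balphas\|_2^2-\sigma_k^2\|g_k\|_2^2+2\sigma_k\eta_k
\]
is correct and is exactly the relation the paper derives from Propositions~\ref{prop1}--\ref{prop3} (your $g_k$ is the paper's $-\bgamma^{k+1}$, and $\eta_k=f(\bZs,\bHs,\bw^k)-f^\ast$). The gap is in how you propose to close it. Your criterion ``$\{\balpha^k\}$ is bounded once $\sum_k\sigma_k\eta_k<\infty$'' cannot be met: by Proposition~\ref{prop2} and the safeguard $\bw_i^k\ge\varepsilon_0/(k+1)^2$, the error satisfies
\[
\eta_k\ \le\ \frac{\lambda^2(k+1)^2}{\varepsilon_0\delta}\,\|g_{k-1}\|_2^2\ +\ \frac{K\varepsilon_0}{(k+1)^2},
\]
and since $\sigma_k=\sigma_0\rho^k$ grows geometrically, the second contribution alone gives $\sigma_k\eta_k\ge\sigma_0K\varepsilon_0\,\rho^k/(k+1)^2\to\infty$. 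Your joint induction does not rescue this: even granting $\|g_j\|_2^2\lesssim(j+1)^2\rho^{-j}$, the first contribution gives $\sigma_j\eta_j\gtrsim j^4$, so the partial sums $\sum_{j\le k}\sigma_j\eta_j$ are not below any fixed budget and the step ``re-establishes $\|\balpha^{k+1}-\balphas\|_2\le M$'' fails.

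The paper sidesteps the circularity entirely with one device you are missing: \emph{divide the Fej\'er inequality by $\sigma_k$ before telescoping}. Using $\sigma_{k+1}\ge\sigma_k$ one telescopes the quantity $\|\balpha^k-\balphas\|_2^2/\sigma_k$, which is bounded by its initial value with no need to prove $\{\balpha^k\}$ bounded. On the right-hand side one is left with
\[
\sigma_k\|\bgamma^{k+1}\|_2^2\ -\ \frac{2\lambda^2(k+1)^2}{\varepsilon_0\delta}\|\bgamma^{k}\|_2^2\ -\ \frac{2K\varepsilon_0}{(k+1)^2}.
\]
Now the consecutive $\|\bgamma\|$ terms pair up after a one-step index shift: the positive $\sigma_k\|\bgamma^{k+1}\|_2^2$ from step $k$ absorbs the negative $\frac{2\lambda^2(k+2)^2}{\varepsilon_0\delta}\|\bgamma^{k+1}\|_2^2$ from step $k+1$, and because $\sigma_k$ grows exponentially while the coefficient grows only polynomially, the net coefficient is eventually $\ge 1$. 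The remaining $\sum_k\frac{2K\varepsilon_0}{(k+1)^2}$ is finite, so $\sum_k\|\bgamma^{k+1}\|_2^2<\infty$ and your first display finishes the proof. In short, neither a separate ``residual bound'' step nor an induction on the multipliers is needed; the whole argument is a single telescoping sum once the division by $\sigma_k$ is made.
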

		\begin{proof}
			Let $\bgamma^{k} = \yhat - \cA(\bZ^k\bH^{kT})$. By Proposition~\ref{prop1}, $\cL_{0,\cA}(\bZ^{k+1},\bH^{k+1},\bw^{k};\balpha^{k+1}) \leq \cL_{0,\cA}(\bZ^*,\bH^*,\bw^k;\balpha^{k+1})$, i.e.
			\begin{align}
				f(\bZ^{k+1},\bH^{k+1},\bw^{k}) + \langle\balpha^{k+1},\bgamma^{k+1}\rangle\leq f(\bZ^*,\bH^*,\bw^k).
				\label{5}
			\end{align}
			
			By Proposition~\ref{prop3}, $f^* \leq \cL_{0,\cA}(\bZ^{k+1},\bH^{k+1},\bw^{k};\balphas)$, i.e.
			\begin{align}
				f^* \leq f(\bZ^{k+1},\bH^{k+1},\bw^{k}) + \langle\balpha^{k+1},\bgamma^{k+1}\rangle.
				\label{6}
			\end{align}
			
			Adding (\ref{5}) and (\ref{6}) gives
			\begin{align}
				\langle\balpha^{k+1}-\balphas,\bgamma^{k+1}\rangle\leq f(\bZ^*,\bH^*,\bw^k) - f^*. \label{7}
			\end{align}
			
			Recall $\balpha^{k}=\balpha^{k+1}-\sigma_k\bgamma^{k+1}$; thus 
			\begin{align}
				&\|\balpha^k - \balphas\|_2^2 - \|\balpha^{k+1} - \balphas\|^2_2 \nonumber\\
				&= \sigma_k^2\|\bgamma^{k+1}\|_2^2 - 2\sigma_k\langle\balpha^{k+1}-\balphas,\bgamma^{k+1}\rangle\nonumber\\
				&\geq\sigma_k^2\|\bgamma^{k+1}\|_2^2 - 2\sigma_k \left(f(\bZ^*,\bH^*,\bw^k) - f^*\right) \quad (\text{using (\ref{7})}).
				\label{8}
			\end{align}
			
			On the other hand, since $\bZ^k\bH^{kT} - \bZ^*\bH^{*T}$ has rank at most $2r$, we may invoke the RIP property of $\cA$ to get
			\begin{align}
				\|\bgamma^k\|_2^2=\|\cA(\bZ^k\bH^{kT} - \bZ^*\bH^{*T})\|^2_2 \geq \delta\|\bZ^k\bH^{kT} - \bZ^*\bH^{*T}\|_F^2,
			\end{align}
			
			and by letting $\varepsilon_k=\frac{\varepsilon_0}{(k+1)^2}$, we have
			\begin{align}
				&f(\bZ^*,\bH^*,\bw^k) - f^*\nonumber\\
				&= \sum_{i=1}^{K}\left(\bw_i^k + \frac{\bw_i^{*2}}{\bw_i^k} - 2\bw_i^*\right)\qquad(\text{using~(\ref{eqn:start_eq})}) \nonumber\\
				&= \sum_{i=1}^{K}\frac{1}{2}\frac{(\bw_i^k - \bw_i^*)^2}{\bw_i^k} \nonumber\\
				&\leq\frac{\|\bw^k - \bw^*\|^2_2}{2\varepsilon_k}\qquad(\text{since $w_i^k\geq\varepsilon_k$}) \nonumber\\
				&\leq\frac{\lambda^2}{\varepsilon_k}\|\bZ^k\bH^{kT} - \bZ^*\bH^{*T}\|_F^2+K\varepsilon_k\qquad(\text{using Proposition 2})\nonumber\\
				&\leq\frac{\lambda^2(k+1)^2}{\varepsilon_0\delta}\|\bgamma^k\|_2^2+\frac{K\varepsilon_0}{(k+1)^2}.
				\label{9}
			\end{align}
			
			Plugging (\ref{9}) into (\ref{8}), we get
			\begin{align}
				&\|\balpha^k - \balphas\|^2_2 - \|\balpha^{k+1} - \balphas\|_2^2 \geq \sigma_k^2\|\bgamma^{k+1}\|_2^2 \nonumber\\
				&- \frac{2\sigma_k\lambda^{2}(k+1)^2}{\varepsilon_0\delta}\|\bgamma^k\|_2^2-\frac{2\sigma_kK\varepsilon_0}{(k+1)^2}.
			\end{align}
			
			And by left division over $\sigma_k$, we have
			\begin{align}
				&\frac{\|\balpha^k-\balpha^\ast\|_2^2}{\sigma_k}-\frac{\|\balpha^{k+1}-\balpha^\ast\|_2^2}{\sigma_{k+1}}\nonumber\\
				&\geq\frac{\|\balpha^k - \balphas\|^2_2}{\sigma_k} - \frac{\|\balpha^{k+1} - \balphas\|_2^2}{\sigma_k} \nonumber\\
				&\geq \sigma_k\|\bgamma^{k+1}\|_2^2 - \frac{2\lambda^{2}(k+1)^2}{\varepsilon_0\delta}\|\bgamma^k\|_2^2-\frac{2K\varepsilon_0}{(k+1)^2}.
			\end{align}
			
			Since $\sigma_k = \sigma_0\rho^k$, $\sigma_k>\frac{2\lambda^2\rho(k+2)^2}{\varepsilon_0\delta}+1$ for $k\geq\bar{k}$. Summing over $\bar{k}$ to $\infty$ gives 
			\begin{align}
				\frac{\|\balpha^{\bar{k}} - \balphas\|_2^2}{\sigma_{\bar{k}}}&\geq\sum_{k=\bar{k}}^{\infty}\left[\sigma_k - \frac{2\lambda^{2}(k+2)^2}{\varepsilon_0\delta}\right]\|\bgamma^{k+1}\|_2^2 -\nonumber\\
				&~~~~\sigma_{\bar{k}}\frac{2\lambda^{2}(\bar{k}+1)^2}{\varepsilon_0\delta}\|\bgamma^{\bar{k}}\|_2^2-\sum_{k=\bar{k}}^{\infty}\frac{2K\varepsilon_0}{(k+1)^2}\nonumber\\
				&\geq\sum_{k=\bar{k}}^{\infty}\|\bgamma^{k+1}\|_2^2 - \sigma_{\bar{k}}\frac{2\lambda^{2}(\bar{k}+1)^2}{\varepsilon_0\delta}\|\bgamma^{\bar{k}}\|_2^2-\nonumber\\
				&~~~\sum_{k=\bar{k}}^\infty\frac{2K\varepsilon_0}{(k+1)^2}.
			\end{align}
			
			Since the left hand side is bounded and $\sum_{k=\bar{k}}^\infty\frac{1}{(k+1)^2}$ converges, $\sum_{k=\bar{k}}^{\infty}\sigma_k\|\bgamma^{k+1}\|^2$ converges and thus $\bgamma^{k+1}\rightarrow 0$. Moreover, as $\|\bZ^k\bH^{kT} - \bZ^*\bH^{*T}\|_F^2\leq\frac{1}{\delta}\|\cA(\bZ^k\bH^{kT} - \bZ^*\bH^{*T})\|^2_2$, we have $\lim\limits_{k}\bZ^k\bH^{kT} = \bZ^*\bH^{*T}$.
		\end{proof}
		
		Finally, we note that in general it is impossible to prove $\lim_k\bZ_k=\bZ^\ast$ and $\lim_k\bH_k=\bH^\ast$ separately due to the fact that $(\bZs,\bHs)$ cannot be uniquely identified. To understand this point, let $\bQ\in\bR^{r\times r}$ be any orthogonal matrix, then $(\bZs\bQ,\bHs\bQ)$ also certifies as a minimizer to (7) in the paper. However, the dominating singular vector remains the same except for possible sign changes, and under the context of blind deblurring such ambiguity is inconsequential.

		
	\end{document}